\documentclass[10pt]{article} 
\usepackage[preprint]{tmlr}

\usepackage{amsmath,amsfonts,bm}

\def\eqref#1{equation~\ref{#1}}

\def\1{\bm{1}}

\def\vx{{\bm{x}}}

\def\mA{{\bm{A}}}
\def\mB{{\bm{B}}}
\def\mC{{\bm{C}}}

\def\mI{{\bm{I}}}

\def\mL{{\bm{L}}}
\def\mM{{\bm{M}}}

\def\mQ{{\bm{Q}}}
\def\mR{{\bm{R}}}
\def\mS{{\bm{S}}}

\def\mU{{\bm{U}}}
\def\mV{{\bm{V}}}
\def\mW{{\bm{W}}}
\def\mX{{\bm{X}}}
\def\mY{{\bm{Y}}}
\def\mZ{{\bm{Z}}}

\def\mLambda{{\bm{\Lambda}}}
\def\mSigma{{\bm{\Sigma}}}

\DeclareMathAlphabet{\mathsfit}{\encodingdefault}{\sfdefault}{m}{sl}
\SetMathAlphabet{\mathsfit}{bold}{\encodingdefault}{\sfdefault}{bx}{n}

\def\gF{{\mathcal{F}}}

\usepackage{amsmath,amsthm}
\usepackage{algorithm}
\let\AND\relax
\usepackage{algorithmic}
\newtheorem{theorem}{Theorem}[section]    
\newtheorem{lemma}[theorem]{Lemma}        

\newtheorem{corollary}[theorem]{Corollary}

\theoremstyle{definition}

\theoremstyle{remark}

\newtheorem*{remark*}{Remark}

\usepackage{booktabs}
\usepackage{wrapfig}
\usepackage{multirow}
\usepackage{graphicx}
\usepackage{array}
\usepackage{colortbl}
\usepackage{subcaption}

\usepackage{arydshln}

\usepackage{tikz}
\usetikzlibrary{positioning, calc, arrows.meta, fit, shadows, decorations.pathmorphing, decorations.pathreplacing, shapes.geometric}
\usepackage{amssymb, bm}
\usepackage{pifont}

\definecolor{obj1}{RGB}{90,90,90}    
\definecolor{obj2}{RGB}{0,130,60}    
\definecolor{obj3}{RGB}{180,100,0}   
\definecolor{obj4}{RGB}{0,80,180}    
\DeclareRobustCommand{\objbadge}[2]{%
  \mbox{\begin{tikzpicture}[baseline=(B.base)]%
    \node[draw=#1, circle, fill=#1!12, text=#1,
      font=\scriptsize\bfseries, inner sep=1.5pt, minimum size=4mm] (B) {#2};%
  \end{tikzpicture}}%
}

\usepackage{hyperref}
\usepackage{url}

\title{AA-SVD: Anchored and Adaptive SVD for Large Language Model Compression}

\author{\name Atul Kumar Sinha \email atul.sinha@unige.ch \\
      University of Geneva, Geneva, Switzerland\\ \\
      \AND
      \name François Fleuret \email francois.fleuret@unige.ch \\
      University of Geneva, Geneva, Switzerland\\
      FAIR, Meta}

\newcommand{\ourmethod}{{\textsc{AA-SVD}}}
\newcommand{\ourmethodlight}{{\textsc{AA-SVD}}}

\def\month{MM}  
\def\year{YYYY} 
\def\openreview{\url{https://openreview.net/forum?id=XXXX}} 

\begin{document}

\maketitle

\begin{abstract}
We introduce a fast low-rank factorization-based framework for compressing large language models that enables rapid compression of billion-parameter models without retraining. 
Unlike existing factorization-based approaches that optimize only on the original inputs, ignoring distribution shifts from upstream compression and thus propagating errors forward, or those that
rely only on shifted inputs and risk drifting away from the original outputs, our approach accounts for both. 
Beyond individual layer compression, we further refine each transformer block end-to-end, minimizing block-level output distortion and allowing compressed layers to jointly compensate for accumulated errors. By anchoring each compressed layer to the original outputs while explicitly modeling input distribution shifts, 
our method finds a low-rank approximation that maintains functional equivalence with the original model.
Experiments on large language models show that our method consistently outperforms existing SVD-based baselines across compression ratios, with the advantage becoming increasingly pronounced at aggressive compression budgets, where competing methods degrade substantially or collapse entirely, offering a practical solution for efficient, large-scale model deployment.\footnote{Project page at \url{https://github.com/atulkumarin/AA-SVD}.}

\end{abstract}

\section{Introduction}
\label{sec:intro}
\begin{wrapfigure}{r}{0.5\textwidth}
    \centering
    \vspace{-1cm}
    \includegraphics[width=0.5\textwidth]{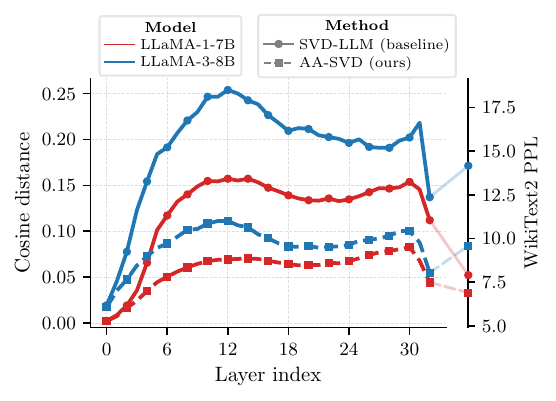}
    \caption{Distortion (cosine distance) between \emph{intermediate} features of the original and compressed model. Diagonal lines link each method's final-layer distortion to its WikiText2 perplexity. \ourmethod~suppresses compression error consistently across depth.}
    \label{fig:main_page_figure}
    \vspace{-1cm}
\end{wrapfigure}
The rapid progress of large-scale pretrained models has fundamentally transformed natural language processing.
Modern large language models (LLMs)~\citep{brown2020language, touvron2023LLaMA, zhang2022opt, achiam2023gpt} now routinely contain tens to hundreds of billions of parameters, enabling remarkable generalization across a wide range of downstream tasks~\citep{kaplan2020scaling}.
However, this scaling has come at steep computational cost: training, fine-tuning, and inference with such models require clusters of high-memory GPUs, making them prohibitively expensive to deploy in resource-constrained or latency-sensitive settings~\citep{patterson2021carbon}.

One promising direction is to move beyond ever-larger models toward smaller, more efficient ones.
Compact models can be trained from scratch for specialized tasks, but this approach sacrifices the broad generalization ability of large pretrained networks.
Alternatively, smaller models can be obtained by \emph{distilling} large networks into student models trained to mimic their behavior~\citep{hinton2015distilling, xu2024survey}, or by applying \emph{post-training compression} techniques such as pruning, quantization, or low-rank factorization~\citep{cheng2017survey, zhu2024survey}.
While both approaches reduce memory footprint and inference cost, distillation typically requires substantial retraining data and compute~\citep{hinton2015distilling, jiao2020tinybert, sanh2019distilbert}, whereas post-training compression can often be applied more rapidly to pretrained networks~\citep{frantar2022gptq, dettmers2022llmint8, wang2025svdllm}, thereby offering a practical path towards democratizing deployment.

A wide range of model compression techniques have been proposed:
\emph{Pruning} removes redundant weights or structures from neural networks, with early work on unstructured sparsification~\citep{han2015learning} and the lottery ticket hypothesis~\citep{frankle2019lottery} showing that smaller subnetworks can be retrained to match dense counterparts.
While effective, pruning often requires iterative retraining and specialized sparsity-aware hardware to fully realize efficiency gains, though recent advances such as SparseGPT and its variants~\citep{frantar2023sparsegpt,ma2023llm, ashkboos2024slicegpt, an2024fluctuation} have enabled post-training pruning of large language models.
\emph{Quantization} reduces numerical precision of weights and activations; modern methods like LLM.int8()~\citep{dettmers2022llmint8}, QLoRA~\citep{dettmers2023qlora}, and AWQ~\citep{lin2024awq} allow near-lossless compression of transformers, though very low-bit settings may require careful calibration.
Another line of work leverages the inherent low-rank structure of network weights: \emph{low-rank factorization} decomposes large matrices into compact representations, reducing both parameters and computation.
Early applications in CNNs~\citep{denton2014exploiting,tai2015convolutional} demonstrated significant speedups, but naïve SVD truncation is known to degrade accuracy.
More recent activation-aware approaches for LLMs~\citep{yuan2023asvd,wang2025svdllm,Li2025AdaSVD,Wang2025DobiSVD} explicitly account for input activations, mitigating this limitation at the cost of additional computation.


These methods differ in their retraining requirements, their dependence on large datasets versus small calibration samples, the efficiency with which compression can be applied to pretrained networks, the degree to which downstream accuracy is preserved, and the extent to which the resulting compressed structure aligns with modern accelerators~\citep{cheng2018survey}.
\emph{SVD-based methods} are especially appealing: they exploit the inherent low-rank structure of neural network weights, yielding compressed models without the need for expensive retraining~\citep{denton2014exploiting,jaderberg2014speeding}.
A straightforward approach is to directly truncate weight matrices by retaining only the top singular components, but this often leads to severe degradation because it treats all input directions equally and discards information that is important for the actual distribution of activations~\citep{denil2013predicting,chen2021drone,wang2025svdllm}.
This limitation has been repeatedly observed in large-scale networks, where naïve low-rank truncation fails to preserve task accuracy and generalization.
To address this, activation-aware approaches have been developed that tailor the factorization to the input distribution, thereby retaining the directions most relevant to the network’s operation.
However, existing activation-aware SVD methods often optimize low-rank approximations using only the original input distribution~\citep{yuan2023asvd,wang2025svdllm,Li2025AdaSVD,Wang2025DobiSVD}, ignoring the shift introduced by upstream compression, which can propagate errors and degrade downstream performance.
Conversely, methods that rely exclusively on shifted inputs, such as Dobi-SVD~\citep{Wang2025DobiSVD}, risk deviating from the original network behavior, introducing instability and loss of fidelity.

In this work, we present \ourmethod, a \emph{fast low-rank factorization-based framework} for compressing pretrained networks.
Our approach accounts for both the original outputs and the distribution shifts caused by upstream compression.
This design yields compressed layers that more faithfully preserve the functional behavior of the uncompressed model, enabling post-training compression of billion-parameter networks without retraining.
Additionally, \ourmethod~refines all compressed layers within a block jointly, minimizing the block-output error and allowing layers to compensate for each other's residual errors.
Figure~\ref{fig:main_page_figure} illustrates how \ourmethod~suppresses compression error consistently across depth compared to prior methods.

\section{Related Work}
\label{sec:related}

Low-rank factorization, e.g., via singular value decomposition (SVD),  has emerged as a promising direction for compressing large pretrained models.
Unlike pruning (irregular sparsity) or quantization (specialized kernels), factorization yields dense, structured factors—enabling the commutation $(\mU\mV^\top)\mX = \mU(\mV^\top\mX)$—that integrate seamlessly with standard linear algebra libraries and reduce both parameters and FLOPs.
Crucially, they can be applied post-training with only a small\footnote{usually 64-1024 samples} calibration set, making them attractive when retraining is infeasible. Recent methods such as ASVD~\citep{yuan2023asvd}, SVD-LLM~\citep{wang2025svdllm}, AdaSVD~\citep{Li2025AdaSVD}, SVD-LLM V2~\citep{wang2025svdllmv2}, Dobi-SVD~\citep{Wang2025DobiSVD}, DipSVD~\citep{ding2025dipsvd}, and SAES-SVD~\citep{hu2026saes} have demonstrated the viability of this approach at scale in large language models.
Based on the optimization objective, compression methods can be broadly grouped into the following categories (Figure~\ref{fig:overview} (left) gives a visual overview):

\begin{figure}[t]
    \centering
    \begin{minipage}[t]{0.59\textwidth}
        \vspace{0pt}
        \centering
        \resizebox{\textwidth}{!}{\input{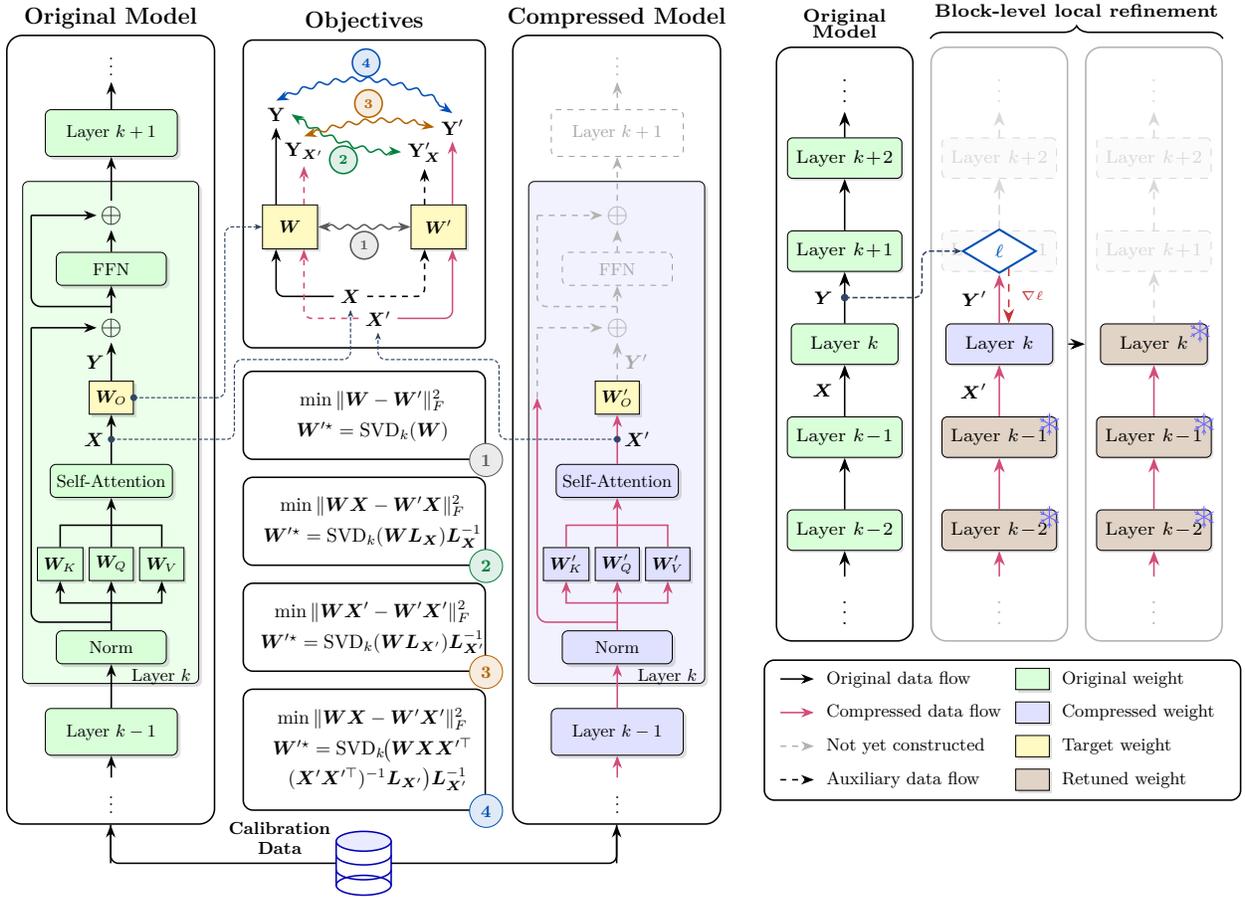}}%
    \end{minipage}%
    \hfill
    \begin{minipage}[t]{0.39\textwidth}
        \vspace{12pt}
        \centering
        \vspace{-0.5cm}
        \resizebox{0.95\textwidth}{!}{

\begin{tikzpicture}[
    >=Stealth,
    node distance=0.5cm,
]

\definecolor{compbrown}{RGB}{180,145,110}   
\definecolor{obj4}{RGB}{0,80,180}            
\definecolor{gradred}{RGB}{200,50,50}        
\definecolor{prov}{RGB}{50,70,100}           

\tikzset{
    seqlayer/.style={
        draw, rectangle, rounded corners=2pt,
        minimum width=1.45cm, minimum height=0.55cm,
        font=\scriptsize, align=center,
        drop shadow={shadow xshift=0.2ex, shadow yshift=-0.2ex, opacity=0.2}
    },
    seqorig/.style={seqlayer, fill=green!15},
    seqcomp/.style={seqlayer, fill=compbrown!40},
    seqstep/.style={seqlayer, fill=blue!12},
    seqghost/.style={
        draw=gray!30, rectangle, rounded corners=2pt, dashed,
        fill=gray!3, minimum width=1.45cm, minimum height=0.55cm,
        font=\scriptsize, align=center, text=gray!30
    },
    seqouter/.style={
        draw, rounded corners=6pt, semithick,
        inner xsep=4pt, inner ysep=3pt, fill=none
    },
    origconn/.style={semithick, ->},                        
    compconn/.style={semithick, ->, draw=purple!70},        
    seqgconn/.style={semithick, ->, draw=gray!30, dashed},  
    provenance/.style={
        draw=prov, semithick, dash pattern=on 2pt off 1pt,
        -{Stealth[length=3pt, width=2.5pt, fill=prov]},
        rounded corners=3pt,
    },
    provsrc/.style={
        circle, draw=prov, fill=prov, inner sep=0pt,
        minimum size=2.5pt,
    },
}

\def\seqD{2.1}   
\def\srow{0.7}   

\def\sxOrig{0}
\pgfmathsetmacro{\sxStepA}{\seqD}
\pgfmathsetmacro{\sxStepB}{2*\seqD}

\begin{scope}[shift={(\sxOrig, 0)}]
    \node[font=\scriptsize] (origdots) at (0, 0) {$\vdots$};
    \node[seqorig, above=0.35cm of origdots] (origkm2) {Layer $k\!-\!2$};
    \node[seqorig, above=\srow of origkm2] (origkm1) {Layer $k\!-\!1$};
    \node[seqorig, above=\srow of origkm1] (origk) {Layer $k$};
    \node[seqorig, above=\srow of origk] (origkp1) {Layer $k\!+\!1$};
    \node[seqorig, above=\srow of origkp1] (origkp2) {Layer $k\!+\!2$};
    \node[font=\scriptsize, above=0.35cm of origkp2] (origdotstop) {$\vdots$};
    \node[font=\scriptsize\bfseries, left=1pt]
        at ($(origkm1.north)!0.5!(origk.south)$) {$\bm{X}$};
    \node[font=\scriptsize\bfseries, left=1pt]
        (origY) at ($(origk.north)!0.5!(origkp1.south)$) {$\bm{Y}$};
    \draw[origconn] (origdots.north) -- (origkm2.south);
    \draw[origconn] (origkm2.north) -- (origkm1.south);
    \draw[origconn] (origkm1.north) -- (origk.south);
    \draw[origconn] (origk.north) -- (origkp1.south);
    \draw[origconn] (origkp1.north) -- (origkp2.south);
    \draw[origconn] (origkp2.north) -- (origdotstop.south);
    \node[seqouter,
          fit=(origdots)(origdotstop)(origkm2)(origkp2)] (origbox) {};
    \node[above=0.0cm of origbox.north, font=\bfseries\scriptsize, align=center]
        {Original\\[-2pt]Model};
\end{scope}

\begin{scope}[shift={(\sxStepA, 0)}]
    \node[font=\scriptsize, text=gray!60] (s1dots) at (0, 0) {$\vdots$};
    \node[seqcomp, above=0.35cm of s1dots] (s1km2) {Layer $k\!-\!2$};
    \node[seqcomp, above=\srow of s1km2] (s1km1) {Layer $k\!-\!1$};
    \node[seqstep, above=\srow of s1km1] (s1k) {Layer $k$};

    \node[seqghost, above=\srow of s1k] (s1kp1) {Layer $k\!+\!1$};
    \node[seqghost, above=\srow of s1kp1] (s1kp2) {Layer $k\!+\!2$};
    \node[font=\scriptsize, text=gray!30, above=0.35cm of s1kp2] (s1dotstop) {$\vdots$};

    \node[font=\scriptsize\bfseries, text=black, left=1pt]
        at ($(s1km1.north)!0.5!(s1k.south)$) {$\bm{X}'$};

    \draw[compconn] (s1dots.north) -- (s1km2.south);
    \draw[compconn] (s1km2.north) -- (s1km1.south);
    \draw[compconn] (s1km1.north) -- (s1k.south);

    \draw[seqgconn] (s1kp1.north) -- (s1kp2.south);
    \draw[seqgconn] (s1kp2.north) -- (s1dotstop.south);

    \node[draw=obj4, thick, diamond, aspect=1.8,
          fill=white, minimum width=1.0cm, minimum height=0.6cm,
          font=\scriptsize\bfseries, text=obj4, inner sep=1pt]
        (lossnode) at (s1kp1) {$\ell$};

    \draw[compconn] (s1k.north) -- (lossnode.south)
        node[midway, left=1pt, font=\scriptsize\bfseries, text=black] {$\bm{Y}'$};

    \draw[seqgconn] (lossnode.north) -- (s1kp2.south);

    \coordinate (gradStart) at ($(lossnode.south)!0.5!(lossnode.south east)$);
    \draw[gradred, semithick, dashed, ->]
        (gradStart) -- (gradStart |- s1k.north)
        node[midway, right=1pt, font=\tiny\bfseries, text=gradred] {$\nabla\ell$};

    \foreach \fnode in {s1km2, s1km1} {
        \begin{scope}[shift={([xshift=-3pt, yshift=-3pt]\fnode.north east)},
                      scale=0.15, draw=blue!60, thin]
            \draw (0,-1) -- (0,1);
            \draw (-0.87,-0.5) -- (0.87,0.5);
            \draw (-0.87,0.5) -- (0.87,-0.5);
            \foreach \a in {0,60,...,300} {
                \draw[rotate=\a] (0,0.55) -- (-0.18,0.8) (0,0.55) -- (0.18,0.8);
            }
        \end{scope}
    }

    \node[seqouter, draw=gray!60,
          fit=(s1dots)(s1dotstop)(s1km2)(s1kp2)] (s1box) {};
\end{scope}

\begin{scope}[shift={(\sxStepB, 0)}]
    \node[font=\scriptsize, text=gray!60] (s2dots) at (0, 0) {$\vdots$};
    \node[seqcomp, above=0.35cm of s2dots] (s2km2) {Layer $k\!-\!2$};
    \node[seqcomp, above=\srow of s2km2] (s2km1) {Layer $k\!-\!1$};
    \node[seqcomp, above=\srow of s2km1] (s2k) {Layer $k$};
    \node[seqghost, above=\srow of s2k] (s2kp1) {Layer $k\!+\!1$};
    \node[seqghost, above=\srow of s2kp1] (s2kp2) {Layer $k\!+\!2$};
    \node[font=\scriptsize, text=gray!30, above=0.35cm of s2kp2] (s2dotstop) {$\vdots$};

    \draw[compconn] (s2dots.north) -- (s2km2.south);
    \draw[compconn] (s2km2.north) -- (s2km1.south);
    \draw[compconn] (s2km1.north) -- (s2k.south);
    \draw[seqgconn] (s2k.north) -- (s2kp1.south);
    \draw[seqgconn] (s2kp1.north) -- (s2kp2.south);
    \draw[seqgconn] (s2kp2.north) -- (s2dotstop.south);

    \foreach \fnode in {s2km2, s2km1, s2k} {
        \begin{scope}[shift={([xshift=-3pt, yshift=-3pt]\fnode.north east)},
                      scale=0.15, draw=blue!60, thin]
            \draw (0,-1) -- (0,1);
            \draw (-0.87,-0.5) -- (0.87,0.5);
            \draw (-0.87,0.5) -- (0.87,-0.5);
            \foreach \a in {0,60,...,300} {
                \draw[rotate=\a] (0,0.55) -- (-0.18,0.8) (0,0.55) -- (0.18,0.8);
            }
        \end{scope}
    }

    \node[seqouter, draw=gray!60,
          fit=(s2dots)(s2dotstop)(s2km2)(s2kp2)] (s2box) {};
\end{scope}

\draw[semithick, ->] (s1box.east) -- (s2box.west);


\coordinate (srcY) at ($(origk.north)!0.5!(origkp1.south)$);
\node[provsrc] at (srcY) {};

\coordinate (provLane) at ($(origbox.east)!0.5!(s1box.west)$);

\draw[provenance]
    (srcY) -- (provLane |- srcY)
    -- (provLane |- lossnode.west) -- (lossnode.west);

\draw[decorate, decoration={brace, amplitude=5pt, raise=3pt}, semithick]
    ([yshift=0pt]s1box.north west) -- ([yshift=0pt]s2box.north east)
    node[midway, above=8pt, font=\scriptsize\bfseries] {Block-level local refinement};

\end{tikzpicture}}%
        \vspace{0.2cm}
        \resizebox{1.\textwidth}{!}{

\begin{tikzpicture}[
    >=Stealth,
]

\definecolor{compbrown}{RGB}{180,145,110}

\def\lrow{0.6}    
\def\llen{0.6}    
\def\ltxt{0.1}    
\def\colsep{4.2}  

\node[draw, rounded corners=5pt, thick, fill=white,
      minimum width=8.5cm, minimum height=2.5cm,
      anchor=north] (legendbox) at (0,0) {};


\coordinate (L0) at ([xshift=0.3cm, yshift=-0.35cm]legendbox.north west);

\begin{scope}[shift={(L0)}]
    \draw[thick, ->] (0, 0) -- +(\llen, 0);
    \node[font=\small, anchor=west] at (\llen+\ltxt, 0) {Original data flow};

    \draw[thick, ->, draw=purple!70] (0, -\lrow) -- +(\llen, 0);
    \node[font=\small, anchor=west] at (\llen+\ltxt, -\lrow) {Compressed data flow};

    \draw[thick, ->, draw=gray!60, dashed] (0, -2*\lrow) -- +(\llen, 0);
    \node[font=\small, anchor=west] at (\llen+\ltxt, -2*\lrow) {Not yet constructed};

    \draw[thick, ->, dashed] (0, -3*\lrow) -- +(\llen, 0);
    \node[font=\small, anchor=west] at (\llen+\ltxt, -3*\lrow) {Auxiliary data flow};
\end{scope}

\begin{scope}[shift={($(L0)+(\colsep,0)$)}]
    \node[draw, rectangle, fill=green!15, minimum width=0.6cm, minimum height=0.38cm,
          drop shadow={shadow xshift=0.3ex, shadow yshift=-0.3ex, opacity=0.25}]
        at (0.3, 0) {};
    \node[font=\small, anchor=west] at (\llen+\ltxt, 0) {Original weight};

    \node[draw, rectangle, fill=blue!12, minimum width=0.6cm, minimum height=0.38cm,
          drop shadow={shadow xshift=0.3ex, shadow yshift=-0.3ex, opacity=0.25}]
        at (0.3, -\lrow) {};
    \node[font=\small, anchor=west] at (\llen+\ltxt, -\lrow) {Compressed weight};

    \node[draw, rectangle, fill=yellow!30, minimum width=0.6cm, minimum height=0.38cm,
          drop shadow={shadow xshift=0.3ex, shadow yshift=-0.3ex, opacity=0.25}]
        at (0.3, -2*\lrow) {};
    \node[font=\small, anchor=west] at (\llen+\ltxt, -2*\lrow) {Target weight};

    \node[draw, rectangle, fill=compbrown!40, minimum width=0.6cm, minimum height=0.38cm,
          drop shadow={shadow xshift=0.3ex, shadow yshift=-0.3ex, opacity=0.25}]
        at (0.3, -3*\lrow) {};
    \node[font=\small, anchor=west] at (\llen+\ltxt, -3*\lrow) {Retuned weight};
\end{scope}

\end{tikzpicture}}%
    \end{minipage}
    \caption[Overview of the two-stage compression pipeline.]{Overview of the two-stage compression pipeline.
             \textbf{Left:}~Four layer-wise compression objectives, differing in which inputs and outputs are compared.
             \objbadge{obj1}{1}~\emph{Input-agnostic}: $\|\mW - \mW'\|_F^2$ — ignores activations entirely.
             \objbadge{obj2}{2}~\emph{Input-aware}: $\|\mW\mX - \mW'\mX\|_F^2$ — matches outputs on original inputs $\mX$.
             \objbadge{obj3}{3}~\emph{Shift-aware}: $\|\mW\mX' - \mW'\mX'\|_F^2$ — matches outputs on the shifted inputs $\mX'$ seen after upstream compression.
             \objbadge{obj4}{4}~\emph{Anchored adaptive} (ours): $\|\mW\mX - \mW'\mX'\|_F^2$ — anchors the target to the original output while conditioning on the shifted input, combining an uncorrupted reference with distribution-shift awareness.
             \textbf{Right:}~Block-level local refinement. Stage~1 factorizes all linear layers in the block independently via any layer-wise objective.
             Stage~2 then jointly optimizes all factorized weights to minimize the block-output error $\ell=\|\mathcal{L}(\mX) - \mathcal{L}'(\mX')\|_F^2$, keeping upstream blocks frozen — the same anchored adaptive spirit as \objbadge{obj4}{4} but applied at block granularity.
             This lets the compressed layers within a block compensate for each other's residual errors, substantially recovering block-output fidelity.}
    \label{fig:overview}
    \vspace{-0.4cm}
\end{figure}
\paragraph{Input-agnostic compression.}
The simplest approach compresses a sub-module $f$ without reference to its inputs, optimizing over the module's parameters alone:
$
\min_{f' \in \gF} \ d \: \!\bigl(f,\, f'\bigr),
$
where $\gF$ denotes the family of admissible compressed sub-modules (e.g., rank-constrained matrices for factorization, sparse masks for pruning), and $d$ is a distance defined purely on the parameters of $f$ and $f'$, with no dependence on any input data.
For pruning, it corresponds to magnitude-based removal of weights or neurons~\citep{han2015learning}; and in
quantization, to rounding without calibration. With low-rank factorization for a linear layer $f(\vx) = \mW\vx$, this takes the form
\[
\min_{\mW' : \mathrm{rank}(\mW') = k} \ \| \mW - \mW' \|_F,
\]
which is solved in closed form by the truncated SVD of $\mW$ using the Eckart--Young theorem,
replacing it by a rank-\(k\) approximation \(\mW'\) constructed from its top singular
components~\citep{Halko2011RandomizedSVD, sainath2013low}.
These methods require no data and are fully order-independent: each sub-module is compressed in isolation with no coupling to the others.
However, they treat all parameter directions uniformly, ignoring the fact that in deep
networks the actual input activations lie in a highly anisotropic
subspace~\citep{ortiz2020neural}: directions preserved by parameter-space
approximations may not align with those that matter for downstream
performance~\citep{chen2021drone, idelbayev2020low}.

\paragraph{Input-aware compression.}
A natural refinement is to account for the geometry of the intermediate features that the sub-module encounters during inference:
$
\min_{f' \in \gF} \ d \: \!\bigl(f(\mX),\, f'(\mX)\bigr),
$
where $\mX \in \mathbb{R}^{n \times l}$ collects intermediate activations at the input of $f$ from the \emph{original, uncompressed} network on calibration samples, and $d$ measures the output discrepancy.
For a linear layer $f(\vx) = \mW\vx$, taking $d$ to be the squared Frobenius norm specializes this to
\[
\min_{\mW' : \mathrm{rank}(\mW') = k} \ \| \mW \mX - \mW' \mX \|_F^2,
\]
a formulation adopted in DRONE~\citep{chen2021drone}, ASVD, SVD-LLM, AdaSVD, SVD-LLM V2 and DipSVD. Pruning methods like FLAP~\citep{an2024fluctuation} use a related objective, leveraging activation statistics from the original network to guide structured pruning decisions.
By preserving the action of $\mW$ on the occupied feature subspace, this is far more faithful to downstream behavior than the input-agnostic objective, and because $\mX$ is fixed, sub-module objectives are fully decoupled and can be compressed in any order.
However, as layers are compressed sequentially, the actual inputs received by each sub-module increasingly diverge from $\mX$ --- and since input-aware methods do not account for this error accumulation, the compressed model's behavior can diverge substantially from the original.

\paragraph{Shift-aware compression.}
A key limitation of input-aware methods is that $\mX$ is produced by the \emph{original} network, whereas in a sequentially compressed pipeline the sub-module actually receives different intermediate features — those produced by the upstream compressed layers.
Shift-aware methods address this by instead minimizing
$
\min_{f' \in \gF} \ d \: \!\bigl(f(\mX'),\, f'(\mX')\bigr),
$
where $\mX' \in \mathbb{R}^{n \times l}$ collects the intermediate features at the input of $f$, produced by running the \emph{partially compressed} network on the same calibration samples, and $d$ measures the output discrepancy on those shifted features.
For a linear layer $f(\vx) = \mW\vx$, taking $d$ to be the squared Frobenius norm gives
\[
\min_{\mW' : \mathrm{rank}(\mW') = k} \ \| \mW \mX' - \mW' \mX' \|_F^2,
\]
as adopted in Dobi-SVD~\citep{Wang2025DobiSVD}, with related ideas in earlier CNN methods~\citep{denton2014exploiting, jaderberg2014speeding} and layer-wise distillation~\citep{jiao2020tinybert}.
The same principle underlies quantization methods such as GPTQ~\citep{frantar2022gptq} and pruning methods such as SparseGPT~\citep{frantar2023sparsegpt}, which process weights in a fixed sequential order, conditioning each update on the outputs of already-compressed predecessors.
By anchoring to the intermediate features the sub-module truly receives, shift-aware methods can mitigate error propagation through the stack.
In stark contrast to input-agnostic and input-aware methods, ordering is not a matter of convenience but a hard requirement: since $\mX'$ depends on all upstream compressed layers, shift-aware compression must follow a valid topological order—compressing out of order yields features $\mX'$ inconsistent with any valid partial compression state.
Their drawback is that when upstream compression has degraded representations, anchoring solely to $\mX'$ risks amplifying divergence from the original network's behavior; moreover, $\mX'$ is estimated from a finite calibration batch and may be noisy, introducing instability.
Thus shift-aware objectives alone provide only a partial solution.

Beyond the choice of approximation objective, the effectiveness of low-rank
factorization depends critically on how ranks are distributed across layers.
Uniform allocation ignores heterogeneity in both compressibility and functional
importance. ASVD~\citep{yuan2023asvd} proposed Sensitivity-based Truncation Rank
Searching (STRS), which evaluates the sensitivity of each linear module to truncation
at different rank levels in isolation, measuring sensitivity as the change in
perplexity on the calibration dataset; this requires repeated full-model evaluations
across modules and rank levels, making it expensive.
SVD-LLM V2~\citep{wang2025svdllmv2} takes a different heuristic approach,
reallocating rank based on the truncation loss \(\|\mW\mX - \mW'\mX\|_F^2\) observed
after an initial uniform compression. Adaptive strategies
such as AdaSVD~\citep{Li2025AdaSVD} leverage layer-importance signals to allocate
more rank where needed, in line with importance-based pruning approaches such as
ShortGPT~\citep{men2024shortgpt}. More principled methods include analytical formulations~\citep{solgi2025activation, abbasi2026zero} and differentiable relaxations~\citep{rausch2025globally, Wang2025DobiSVD} that optimize rank allocation end-to-end, and learned mask approaches~\citep{gao2024adaptive, sundrani2025low, xv2025ara} that select singular components via gradient descent.


\section{\ourmethodlight}
\label{sec:method}

As established in Section~\ref{sec:related}, existing SVD-based compression methods fall into three broad categories that each capture only a partial view of the compression problem: input-agnostic methods ignore the input distribution entirely; input-aware methods account for the original activations but are blind to shifts introduced by upstream compression; and shift-aware methods adapt to the modified inputs but risk drifting from the original network's behavior.
We present \ourmethod~(Anchored and Adaptive SVD), a compression framework that bridges these perspectives.
The central insight is that a faithfully compressed layer must simultaneously satisfy two constraints: its outputs should remain close to those of the uncompressed model, and it must operate correctly on the inputs it will actually receive at inference time—which, after upstream layers have been compressed, may differ substantially from the original activations.
A second insight is that minimizing the error of each linear layer independently is not sufficient: errors across the multiple linear layers within a transformer block can interact, so that even small per-layer errors compound into a larger distortion at the block output.
We therefore introduce a block-level refinement step that minimizes the output error of the entire block after its linear layers have been compressed—allowing the compressed layers to jointly compensate for one another's errors regardless of which layer-wise objective was used.

\subsection{Preliminaries}
\label{sec:prelim}

We consider a pretrained model $\mathcal{M}$ comprising a sequence of $B$ transformer blocks $\{\mathcal{L}_i\}_{i=1}^{B}$, applied sequentially.
Each block $\mathcal{L}_i$ is composed of multiple linear layers—parameterized by weight matrices—together with non-linear operations such as normalization and activations.
Our compression procedure operates at two granularities: at the \emph{linear-layer level}, where each weight matrix $\mW$ within a block is individually approximated by a low-rank matrix; and at the \emph{block level}, where the compressed linear layers within $\mathcal{L}_i$ are jointly refined.

We collect a calibration set of $N$ samples and, for any component $f \in \{\mW, \mathcal{L}_i\}$, denote by $\mX$ the matrix of its input activations on the calibration set (stacked column-wise) and by $f(\mX)$ its corresponding outputs.
For a linear layer, $\mX \in \mathbb{R}^{n \times l}$ and $f(\mX) = \mW\mX \in \mathbb{R}^{m \times l}$; for a block, $\mX \in \mathbb{R}^{d \times l}$ and $f(\mX) = \mathcal{L}_i(\mX) \in \mathbb{R}^{d \times l}$.

When components are compressed sequentially, each receives \emph{shifted} intermediate features produced by upstream compressed components rather than the original network.
We denote by $\mX'$ the corresponding shifted activations—collected by running the same calibration samples through the \emph{partially compressed} network up to (but not including) the current component.
For a linear layer, $f' = \mW'$ is a low-rank matrix with $\mathrm{rank}(\mW') = k \leq \min(m,n)$, decomposed as $\mW' = \mU\mV^\top$ with $\mU \in \mathbb{R}^{m \times k}$ and $\mV \in \mathbb{R}^{n \times k}$.
For a block, $f' = \mathcal{L}'_i$ denotes the block with its linear layers replaced by their low-rank approximations and subsequently refined with a block-level objective.

We now establish the key mathematical results underlying our approach.
We begin with the classical Eckart--Young--Mirsky theorem, which characterizes the optimal low-rank approximation of a matrix in Frobenius norm, and then use it to derive a closed-form solution for the \ourmethodlight~layer-wise objective.

\begin{lemma}[Eckart--Young--Mirsky]
\label{lem:eym}
Let $\mW \in \mathbb{R}^{m \times n}$ with thin SVD $\mW = \mU\mSigma\mV^\top$. Then
\[
\min_{\operatorname{rank}(\mW') \le k} \|\mW - \mW'\|_F^2 = \sum_{i > k} \sigma_i(\mW)^2,
\]
and the unique minimizer is $\mW'^\star = \operatorname{SVD}_k(\mW) = \mU_k\mSigma_k\mV_k^\top$, the truncation to the top-$k$ singular components.
\end{lemma}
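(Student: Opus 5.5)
The plan is to prove a matching lower bound and upper bound, then handle uniqueness separately. The upper bound is immediate. Take $\mW'^\star = \mU_k\mSigma_k\mV_k^\top$, which has rank at most $k$. The residual is then $\mW - \mW'^\star = \sum_{i>k}\sigma_i \bm{u}_i\bm{v}_i^\top$. By orthonormality of the singular vectors its squared Frobenius norm is $\sum_{i>k}\sigma_i(\mW)^2$, using unitary invariance of $\|\cdot\|_F$, i.e.\ $\|\mA\|_F^2=\sum_i\sigma_i(\mA)^2$.

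For the lower bound I would use Weyl's inequality for singular values: $\sigma_{i+j-1}(\mA+\mB)\le\sigma_i(\mA)+\sigma_j(\mB)$. I would establish it from the Courant--Fischer min--max characterization $\sigma_i(\mA)=\min_{\dim S = n-i+1}\max_{x\in S,\|x\|=1}\|\mA x\|$, intersecting the two optimal subspaces. Apply it with $\mA=\mW-\mW'$, $\mB=\mW'$ and $j=k+1$. Since $\operatorname{rank}(\mW')\le k$, we have $\sigma_{k+1}(\mW')=0$, which gives $\sigma_{i+k}(\mW)\le\sigma_i(\mW-\mW')$ for every $i$. Squaring and summing over $i\ge 1$ yields $\|\mW-\mW'\|_F^2=\sum_i\sigma_i(\mW-\mW')^2\ge\sum_{i>k}\sigma_i(\mW)^2$. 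An alternative route avoids Weyl: let $\mathcal{N}=\ker\mW'$, of dimension $\ge n-k$, and bound $\|(\mW-\mW')\mQ\|_F^2=\|\mW\mQ\|_F^2$ for an orthonormal basis $\mQ$ of $\mathcal{N}$ via Ky Fan/Poincaré interlacing. I would keep this only as a fallback.

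Uniqueness is the main obstacle, because as stated it is false without an extra hypothesis. If $\sigma_k(\mW)=\sigma_{k+1}(\mW)$, any choice of basis in the tied singular subspace gives another minimizer. I would therefore add the assumption $\sigma_k>\sigma_{k+1}$, noting it is the generic case, and flag that ``unique'' in the statement should be read that way. Under this gap, a minimizer must make every Weyl inequality above tight. I would first try to turn tightness into a structural statement: the row space of $\mW'$ must equal $\operatorname{span}(\bm{v}_1,\dots,\bm{v}_k)$. For a fixed row space spanned by orthonormal $\mV_S$, the best $\mW'$ is the projection $\mW\mV_S\mV_S^\top$, with error $\|\mW\|_F^2-\|\mW\mV_S\|_F^2$. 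Ky Fan's maximum principle says $\|\mW\mV_S\|_F^2=\operatorname{tr}(\mV_S^\top\mW^\top\mW\mV_S)\le\sum_{i\le k}\sigma_i^2$, with equality iff $S$ is the top-$k$ eigenspace of $\mW^\top\mW$. That eigenspace is unique exactly when $\sigma_k>\sigma_{k+1}$. Under the gap the row space is therefore forced, and $\mW'$ is forced to be $\mW\mV_k\mV_k^\top=\mU_k\mSigma_k\mV_k^\top$. This projection-plus-Ky-Fan argument gives existence, optimal value and uniqueness in one pass, so I may reorganize the whole proof around it.
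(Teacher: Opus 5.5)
The paper does not prove this lemma. It quotes it as the classical Eckart--Young--Mirsky theorem and uses it only as a black box in the proof of Theorem~\ref{thm:lowrank-cholesky-unreg}, so there is no in-paper argument to compare against. Judged on its own, your proposal is correct.

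Of your two routes, the Weyl bound $\sigma_{i+k}(\mW)\le\sigma_i(\mW-\mW')$, squared and summed, is the fastest way to the optimal value, but it gives no easy handle on equality cases. The projection-plus-Ky-Fan route reduces the problem to $\min_S\bigl(\|\mW\|_F^2-\operatorname{tr}(\mV_S^\top\mW^\top\mW\mV_S)\bigr)$ over subspaces $S$ of dimension at most $k$. It yields the value, the minimizer and uniqueness at once, so reorganizing around it is the right choice.

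Ky Fan is elementary here and worth writing out. We have $\operatorname{tr}(\mV_S^\top\mW^\top\mW\mV_S)=\sum_i\sigma_i^2w_i$, where $w_i=\|\mV_S^\top\bm{v}_i\|^2\in[0,1]$ and $\sum_i w_i\le\dim S\le k$. When $\sigma_k>\sigma_{k+1}$, equality forces $w_1=\dots=w_k=1$, i.e.\ $\bm{v}_1,\dots,\bm{v}_k\in S$. This pins down $S$ and at the same time rules out $\operatorname{rank}(\mW')<k$, so you need no separate argument for that case.

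Your uniqueness caveat is right, and it identifies a genuine error in the statement. For $\mW=\mI_2$ and $k=1$, every $\bm{x}\bm{x}^\top$ with $\|\bm{x}\|=1$ attains the minimum value $1$, and $\operatorname{SVD}_k(\mW)$ is not even well defined.

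One sharpening: a tie produces other minimizers only when the tied value is positive. If $\sigma_k=\sigma_{k+1}=0$, then $\operatorname{rank}(\mW)<k$ and $\mW'=\mW$ is the unique minimizer. So the exact condition for uniqueness is $\sigma_k>\sigma_{k+1}$ or $\sigma_k=0$.

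The error is harmless downstream. Theorem~\ref{thm:lowrank-cholesky-unreg} only claims that $\operatorname{SVD}_k(\mM)\mL_\mB^{-1}$ is \emph{an} optimal solution. Its appendix proof needs only that $\operatorname{SVD}_k(\mM)$ attains the minimum, which your existence-and-value argument supplies without any gap assumption.
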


\begin{theorem}[]
\label{thm:lowrank-cholesky-unreg}
Let $\mW\in\mathbb{R}^{m\times n}$ be a fixed weight matrix and $\mA,\mB\in\mathbb{R}^{n\times l}$ be any two matrices.
Fix a target rank $k\in\mathbb{N}$. Consider the optimization problem
\begin{equation}
\label{eq:obj-unreg}
\min_{\operatorname{rank}(\mW')\le k}\;\bigl\|\,\mW\mA - \mW'\mB\,\bigr\|_F^2.
\end{equation}
Suppose $\mB\mB^\top$ is invertible, and let $\mL_\mB$ be any invertible matrix satisfying $\mB\mB^\top = \mL_\mB\mL_\mB^\top$\footnote{Such a decomposition can be found using Cholesky decomposition or eigenvalue decomposition.}.
Then an optimal solution to \eqref{eq:obj-unreg} is
\[
\mW'^{\star} \;=\; \operatorname{SVD}_k\!\Bigl(\mW\mA\mB^{\top}\bigl(\mB\mB^{\top}\bigr)^{-1}\mL_\mB\Bigr)\,\mL_\mB^{-1},
\]
where $\operatorname{SVD}_k(\cdot)$ denotes the best rank-$k$ approximation given by Lemma~\ref{lem:eym}.
\end{theorem}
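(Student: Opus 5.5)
The plan is to reduce the problem to a plain Frobenius-norm low-rank approximation and then invoke Lemma~\ref{lem:eym}. I will do this in two moves: first project $\mW\mA$ onto the row space of $\mB$ to split off a constant term, then whiten by $\mL_\mB$ so that the remaining term becomes an unweighted approximation problem.

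\textbf{Step 1 (orthogonal decomposition).} Set $\mC := \mW\mA\mB^\top(\mB\mB^\top)^{-1}$, the least-squares solution of $\min_{\mM}\|\mW\mA - \mM\mB\|_F^2$. Let $\mathbf{P} = \mB^\top(\mB\mB^\top)^{-1}\mB$ be the orthogonal projector onto the row space of $\mB$. Then $\mW\mA\mathbf{P} = \mC\mB$, and every matrix of the form $\mW'\mB$ satisfies $\mW'\mB\,\mathbf{P} = \mW'\mB$. Writing
\[
\mW\mA - \mW'\mB = \mW\mA(\mI-\mathbf{P}) + (\mC-\mW')\mB,
\]
the cross term is $\operatorname{tr}\!\bigl((\mI-\mathbf{P})\mA^\top\mW^\top(\mC-\mW')\mB\bigr)$. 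It vanishes because $\mB(\mI-\mathbf{P}) = 0$, equivalently $(\mI-\mathbf{P})\mB^\top=0$, after cycling the trace. This gives the Pythagorean identity
\[
\|\mW\mA - \mW'\mB\|_F^2 = \|\mW\mA(\mI-\mathbf{P})\|_F^2 + \|(\mC-\mW')\mB\|_F^2 .
\]
The first term does not depend on $\mW'$, so it suffices to minimize the second over $\operatorname{rank}(\mW')\le k$.

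\textbf{Step 2 (whitening).} Expand $\|(\mC-\mW')\mB\|_F^2 = \operatorname{tr}\!\bigl((\mC-\mW')\mB\mB^\top(\mC-\mW')^\top\bigr)$ and substitute $\mB\mB^\top = \mL_\mB\mL_\mB^\top$. This yields
\[
\|(\mC-\mW')\mB\|_F^2 = \|\mC\mL_\mB - \mW'\mL_\mB\|_F^2 .
\]
Since $\mL_\mB$ is invertible, the map $\mW'\mapsto \mZ := \mW'\mL_\mB$ is a bijection on $m\times n$ matrices that preserves rank. The problem therefore becomes $\min_{\operatorname{rank}(\mZ)\le k}\|\mC\mL_\mB - \mZ\|_F^2$. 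By Lemma~\ref{lem:eym}, an optimum is $\mZ^\star = \operatorname{SVD}_k(\mC\mL_\mB)$. Pulling back gives $\mW'^\star = \operatorname{SVD}_k(\mW\mA\mB^\top(\mB\mB^\top)^{-1}\mL_\mB)\,\mL_\mB^{-1}$, which is exactly the claimed formula. It is optimal for the original problem because the two objectives differ by a constant.

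\textbf{Main obstacle.} The only delicate part is Step 1: checking that the cross term vanishes, that is, that $\mW\mA(\mI-\mathbf{P})$ is Frobenius-orthogonal to every matrix of the form $\mM\mB$. Once $\mathbf{P}$ is introduced as the projector onto the row space of $\mB$, this is a one-line trace computation. Invertibility of $\mB\mB^\top$ is what makes both $\mathbf{P}$ and $\mL_\mB^{-1}$ well defined.

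Two remarks on scope. The theorem claims only "an optimal solution", so uniqueness is not needed. The argument also shows that any other valid factor $\mL_\mB$ yields an optimum; this holds because $\mL_\mB$ is determined up to right multiplication by an orthogonal matrix, and $\operatorname{SVD}_k$ commutes with such multiplication.
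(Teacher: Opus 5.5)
Your proof is correct and follows essentially the same route as the paper. Both isolate a $\mW'$-independent constant, rewrite the remainder as $\|\mW'\mL_\mB - \mM\|_F^2$ with $\mM = \mW\mA\mB^\top(\mB\mB^\top)^{-1}\mL_\mB$, and finish with Lemma~\ref{lem:eym} via the rank-preserving substitution $\mZ = \mW'\mL_\mB$. The only difference is how you isolate the constant: you use the orthogonal projector onto the row space of $\mB$, so it appears as the least-squares residual $\|\mW\mA(\mI-\mathbf{P})\|_F^2$, whereas the paper completes the square after a trace expansion and obtains the equal quantity $\|\mW\mA\|_F^2 - \|\mM\|_F^2$.
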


\begin{proof}
See Appendix~\ref{sec:proof-lowrank}.
\end{proof}

\begin{corollary}[No distribution shift]
\label{cor:no-shift}
If $\mB = \mA$, then $\mA\mB^\top = \mB\mB^\top = \mL_\mB\mL_\mB^\top$, so $\mM = \mW\mL_\mB^\top$.
The solution reduces to
$
\mW'^{\star} = \operatorname{SVD}_k\!\bigl(\mW\mL_\mB\bigr)\,\mL_\mB^{-1},
$
the standard whitening-based low-rank regression solution.
\end{corollary}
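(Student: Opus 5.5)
The corollary follows by specializing Theorem~\ref{thm:lowrank-cholesky-unreg} to $\mB=\mA$, so the plan is a direct substitution with no new optimization argument. Here $\mM$ denotes the matrix fed to $\operatorname{SVD}_k$ in the theorem, namely $\mM = \mW\mA\mB^\top(\mB\mB^\top)^{-1}\mL_\mB$. The only hypothesis to carry over is that $\mA\mA^\top=\mB\mB^\top$ is invertible. This holds whenever the theorem applies, for example when $\mA$ has full row rank, which is the standing assumption.

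First, with $\mB=\mA$ we have $\mA\mB^\top=\mB\mB^\top$. By the choice of $\mL_\mB$, this equals $\mL_\mB\mL_\mB^\top$. Substituting, we get
\[
\mM=\mW\,\mL_\mB\mL_\mB^\top\,(\mL_\mB\mL_\mB^\top)^{-1}\mL_\mB .
\]
Since $\mL_\mB$ is invertible, $(\mL_\mB\mL_\mB^\top)^{-1}=\mL_\mB^{-\top}\mL_\mB^{-1}$. Hence
\[
\mM=\mW\mL_\mB\mL_\mB^\top\mL_\mB^{-\top}\mL_\mB^{-1}\mL_\mB=\mW\mL_\mB .
\]
Even more simply, $\mA\mB^\top(\mB\mB^\top)^{-1}=\mI$, so $\mM=\mW\mL_\mB$ immediately.

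Plugging this into the formula of Theorem~\ref{thm:lowrank-cholesky-unreg} gives $\mW'^\star=\operatorname{SVD}_k(\mW\mL_\mB)\mL_\mB^{-1}$, which is the claim.

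The corollary also says $\mM=\mW\mL_\mB^\top$, which is not literally what the calculation gives. The substitution yields $\mW\mL_\mB$, and the final displayed solution is consistent with that. I would therefore state $\mM=\mW\mL_\mB$. I would also note that the two forms agree whenever $\mL_\mB$ is symmetric, for instance when $\mL_\mB$ is the symmetric square root of $\mB\mB^\top$ obtained from the eigendecomposition.

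As a sanity check, I would give a direct argument that the result matches the classical whitening solution. Write $\|(\mW-\mW')\mA\|_F^2=\operatorname{tr}\bigl((\mW-\mW')\mL\mL^\top(\mW-\mW')^\top\bigr)=\|\mW\mL-\mW'\mL\|_F^2$. Since $\mL$ is invertible, $\mW'\mapsto\mW'\mL$ is a rank-preserving bijection, so Lemma~\ref{lem:eym} gives the minimizer directly.

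The main obstacle is only bookkeeping: keeping track of transposes in $(\mL\mL^\top)^{-1}$, and resolving the $\mL_\mB$ versus $\mL_\mB^\top$ notational discrepancy noted above.
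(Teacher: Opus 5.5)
Your proposal is correct and follows the paper's route: you substitute $\mB=\mA$ into Theorem~\ref{thm:lowrank-cholesky-unreg}, observe that $\mA\mB^\top(\mB\mB^\top)^{-1}=\mI$, and read off $\mW'^\star=\operatorname{SVD}_k(\mW\mL_\mB)\,\mL_\mB^{-1}$. You are also right that the intermediate expression $\mM=\mW\mL_\mB^\top$ in the statement is a typo for $\mM=\mW\mL_\mB$; the paper's own final formula and its SVD-LLM V2 instantiation with $\mL_\mB=\mQ\mLambda^{1/2}$ both use $\mW\mL_\mB$, which is consistent with your correction.
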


\subsection{Linear Layer Compression}

Our goal is to compress each linear transformation while ensuring that the resulting network remains \emph{locally faithful} to the original model under the inputs it will actually encounter.
Concretely, for a weight matrix $\mW \in \mathbb{R}^{m \times n}$ with original inputs $\mX \in \mathbb{R}^{n \times l}$ and shifted inputs $\mX' \in \mathbb{R}^{n \times l}$ (after upstream compression), we seek a low-rank approximation $\mW' \in \mathbb{R}^{m \times n}$ that solves
\[
\min_{\mW' : \mathrm{rank}(\mW') = k} \ \| \mW \mX - \mW' \mX' \|_F^2.
\]
This objective enforces that the compressed outputs $\mW'\mX'$ stay close to the original outputs $\mW\mX$, anchoring the compressed network to the behavior of the uncompressed one while simultaneously adapting to the shifted input distribution.
By explicitly constraining $\mathrm{rank}(\mW')=k$, the problem is well-posed as a low-rank regression: we seek the best rank–$k$ approximation of the mapping from $\mX'$ to $\mW\mX$.
This admits a closed-form solution as shown in Theorem~\ref{thm:lowrank-cholesky-unreg}. Figure~\ref{fig:overview} (left) illustrates the per-layer compression stage.

The solution operates only on the covariance matrices $\mX\mX'^{\top}$ and $\mX'\mX'^{\top}$, not on raw activations, so its cost is independent of the number of calibration tokens. We summarize the procedure in Algorithm~\ref{alg:chol-compression} and provide further details in Appendix~\ref{sec:appendix_linear}.

\subsection{Block-Level Local Refinement}

Although each linear layer is compressed to minimize its own output error, the errors introduced by different layers within the same transformer block can interact.
A small residual error at one layer shifts the activations seen by subsequent layers, so that even modest per-layer errors can compound into a larger distortion at the block output (see Figure~\ref{fig:error_evolution}).
To address this, after all linear layers in a block have been compressed we introduce a \emph{block-level local refinement} step.
Concretely, for block $\mathcal{L}_i$ with original calibration inputs $\mX$ and shifted inputs $\mX'$ (received after upstream blocks are compressed), we minimize
\[
\min_{\{\mU_j,\mV_j\},\,\boldsymbol{\theta}_i} \mathbb{E}_{\mX \sim \mathcal{D}_i}\bigl[\|\mathcal{L}_i(\mX) - \mathcal{L}'_i(\mX')\|^2\bigr],
\]
where $\mathcal{D}_i$ denotes the distribution of input activations to block $\mathcal{L}_i$ induced by the calibration data, $\mathcal{L}'_i$ denotes the block with each linear layer $\mW_j$ replaced by its factorized approximation $\mU_j\mV_j^\top$, and $\boldsymbol{\theta}_i$ denotes the remaining trainable parameters of the block (e.g., normalization scales and biases).
The optimization is thus over all factorized weights and block-local parameters jointly.
This allows the compressed layers within a block to collectively compensate for one another's residual errors.
The objective is minimized via gradient-based optimization.
Because the refinement is confined to a single block and uses only a small calibration set, it adds negligible overhead while substantially recovering block-output fidelity. Figure~\ref{fig:overview} (right) illustrates the block-level refinement stage.

\begin{algorithm}[t]
\caption{\textsc{CompressLayer}: \ourmethod~layer-wise low-rank compression}
\label{alg:chol-compression}
\begin{algorithmic}[1]
\REQUIRE Weight matrix $\mW\in\mathbb{R}^{m\times n}$,
original inputs $\mX\in\mathbb{R}^{n\times l}$, shifted inputs $\mX'\in\mathbb{R}^{n\times l}$, target rank $k$
\STATE Set $\mA = \mX,\,\mB = \mX'$ \hfill\COMMENT{shift-aware: $\mA\!=\!\mB\!=\!\mX'$;\; input-aware: $\mA\!=\!\mB\!=\!\mX$}
\STATE Compute $\mC = \mA\mB^{\top}$ and $\mS = \mB\mB^{\top}$
\STATE Factorize: $\mS = \mR\mR^{\top}$ \hfill\COMMENT{e.g.\ Cholesky or EVD}
\STATE Compute $\mM = \mW\mC\mS^{-1}\mR$
\STATE Truncated SVD: $[\mU_k,\mSigma_k,\mV_k] = \operatorname{SVD}_k(\mM)$
\RETURN factorized weight $\mU = \mU_k\mSigma_k,\;\mV = \mR^{-\top}\mV_k$, so that $\mW' = \mU\mV^\top$
\end{algorithmic}
\end{algorithm}

\begin{algorithm}[t]
\caption{\ourmethod: end-to-end block-wise compression with local refinement}
\label{alg:aa-svd-full}
\begin{algorithmic}[1]
\REQUIRE Model $\mathcal{M}$ with blocks $\{\mathcal{L}_i\}_{i=1}^{B}$, calibration data, target rank $k$
\STATE Extract input activations $\mX \leftarrow \mX' \leftarrow \mathcal{E}(\text{calibration data})$ from the embedding layer $\mathcal{E}$
\FOR{each block $\mathcal{L}_i$ in $\mathcal{M}$}
    \STATE Initialize compressed block $\mathcal{L}'_i \leftarrow \mathcal{L}_i$
    \FOR{each linear layer $\mW_j$ in $\mathcal{L}'_i$}
        \STATE Collect $\mX_j$ from $\mathcal{L}_i$ and $\mX'_j$ from $\mathcal{L}'_i$ by forward pass up to layer $j$
        \STATE $[\mU_j, \mV_j] \leftarrow \textsc{CompressLayer}(\mW_j,\,\mX_j,\,\mX'_j,\,k)$
        \STATE Update $\mW_j \leftarrow \mU_j\mV_j^\top$ in $\mathcal{L}'_i$
    \ENDFOR
    \STATE \textbf{Block-level refinement:} optimize $\{\mU_j,\mV_j\}$ and block-local parameters $\boldsymbol{\theta}_i$ (e.g.\ norms, biases) jointly to minimize $\mathrm{MSE}\!\left(\mathcal{L}_i(\mX),\,\mathcal{L}'_i(\mX')\right)$
    \STATE Update inputs for next block: $\mX \leftarrow \mathcal{L}_i(\mX)$,\quad $\mX' \leftarrow \mathcal{L}'_i(\mX')$
\ENDFOR
\RETURN compressed model $\mathcal{M}'$ with blocks $\{\mathcal{L}'_i\}_{i=1}^{B}$
\end{algorithmic}
\end{algorithm}

The complete end-to-end compression procedure is described in Algorithm~\ref{alg:aa-svd-full}, which processes the model block by block: within each block, \textsc{CompressLayer} (Algorithm~\ref{alg:chol-compression}) is applied to each linear layer in sequence, after which the block-level refinement step is performed before moving to the next block.
Further implementation details are provided in Appendix~\ref{sec:appendix_block}.

\begin{table*}[t]
\centering
\caption{Comparison of \ourmethod~with SOTA methods for SVD-based compression of LLaMA-7B on three language modeling tasks and seven commonsense reasoning benchmarks (zero-shot evaluation) under varying compression ratios. Best performance is marked in bold. $(^\dagger)$ uses LoRA fine-tuning, $(^\ddagger)$ uses dynamic or non-uniform capacity/rank allocation, and $(^q)$ indicates results with Dobi-SVD-style remapping enabled. Results for baseline methods are taken from the original papers or prior work where available.
}
\label{tab:LLaMA-1-7B}
\resizebox{\textwidth}{!}{
\begin{tabular}{c l ccc !{\hspace{3pt}\vline\hspace{3pt}} ccccccccc}
\toprule
\multirow{2}{*}{\textbf{Ratio}} & \multirow{2}{*}{\textbf{Method}} & \multicolumn{3}{c}{\textbf{PPL} ($\downarrow$)} & \multicolumn{9}{c}{\textbf{Accuracy} ($\uparrow$)} \\
\cmidrule(lr){3-5} \cmidrule(lr){6-14}
 & & \textbf{Wiki2} & \textbf{PTB} & \textbf{C4} & \textbf{Openb.} & \textbf{ARC\_e} & \textbf{ARC\_c} & \textbf{WinoG.} & \textbf{PIQA} & \textbf{MathQA} & \textbf{HellaS.} & \textbf{Avg.} & \textbf{Drop (\%)} \\
\specialrule{\lightrulewidth}{2pt}{2pt}
$1.0$ & Dense & $5.68$ & $8.34$ & $7.34$ & $0.34$ & $0.75$ & $0.42$ & $0.69$ & $0.79$ & $0.27$ & $0.57$ & $0.55$ & $-$ \\
\specialrule{\lightrulewidth}{2pt}{2pt}
\multirow{5}{*}{$0.8$}
& ASVD & $11.14$ & $16.55$ & $15.93$ & $0.25$ & $0.53$ & $0.27$ & $0.64$ & $0.68$ & $0.24$ & $0.41$ & $0.43$ & $21.1\%$ \\
& SVD-LLM$^\dagger$ & $7.94$ & $16.22$ & $15.84$ & $0.22$ & $0.58$ & $0.29$ & $0.63$ & $0.69$ & $0.24$ & $0.43$ & $0.44$ & $19.6\%$ \\
& {Dobi-SVD$^\ddagger$} & {$8.54$} & $14.83$ & $\mathbf{10.01}$ & $0.26$ & $0.59$ & $0.31$ & $\mathbf{0.66}$ & $0.70$ & {$0.23$} & $0.44$ & $0.46$ & $16.7\%$ \\
& Dip-SVD$^\ddagger$ & $7.95$ & $15.60$ & $14.07$ & $0.27$ & $0.63$ & $0.33$ & $0.64$ & $0.71$ & $0.24$ & $0.45$ & $0.47$ & $14.6\%$ \\
& SAES-SVD & $7.17$ & $15.16$ & $13.77$ & $0.29$ & $0.68$ & $\mathbf{0.36}$ & $0.65$ & $\mathbf{0.75}$ & $0.25$ & $0.45$ & $0.49$ & $10.4\%$ \\
\rowcolor{gray!12} \cellcolor{white} & \ourmethodlight~& $\mathbf{6.89}$ & $\mathbf{12.30}$ & $12.04$ & $\mathbf{0.31}$ & $\mathbf{0.71}$ & $\mathbf{0.36}$ & $\mathbf{0.66}$ & $0.72$ & $\mathbf{0.25}$ & $\mathbf{0.48}$ & $\mathbf{0.50}$ & $\mathbf{8.9\%}$ \\
\arrayrulecolor{gray!75}\cmidrule(lr){2-14}
\arrayrulecolor{black}
& {Dobi-SVD$^{\ddagger, q}$} & {$6.08$} & $15.39$ & $\mathbf{7.83}$ & $0.27$ & $0.65$ & $0.37$ & $0.68$ & $0.77$ & {$\mathbf{0.27}$} & $\mathbf{0.54}$ & $0.51$ & $7.3\%$ \\
\rowcolor{gray!12} \cellcolor{white} & \ourmethodlight$^q$ & $\mathbf{6.01}$ & $\mathbf{8.97}$ & $8.37$ & $\mathbf{0.30}$ & $\mathbf{0.74}$ & $\mathbf{0.41}$ & $\mathbf{0.69}$ & $\mathbf{0.77}$ & $0.26$ & $0.53$ & $\mathbf{0.53}$ & $\mathbf{3.4\%}$ \\
\specialrule{\lightrulewidth}{2pt}{2pt}
\multirow{5}{*}{$0.6$}
& ASVD & $1407$ & $3292$ & $1109$ & $0.13$ & $0.28$ & $0.22$ & $0.48$ & $0.55$ & $0.19$ & $0.26$ & $0.30$ & $44.9\%$ \\
& SVD-LLM$^\dagger$ & $13.11$ & $63.75$ & $49.83$ & $0.19$ & $0.42$ & $0.25$ & $0.58$ & $0.60$ & $0.21$ & $0.33$ & $0.37$ & $32.6\%$ \\
& {Dobi-SVD$^\ddagger$} & {$13.54$} & ${46.38}$ & $23.54$ & $0.22$ & {$0.41$} & $0.27$ & ${0.58}$ & $0.61$ & {$0.23$} & $0.34$ & $0.38$ & $30.5\%$ \\
& Dip-SVD$^\ddagger$ & $12.76$ & $46.95$ & $34.35$ & $0.22$ & $0.50$ & $0.30$ & $0.61$ & $0.64$ & $0.22$ & $0.36$ & $0.41$ & $25.6\%$ \\
& SAES-SVD & $10.42$ & $45.13$ & $32.79$ & $0.23$ & $0.50$ & $0.29$ & $0.62$ & $\mathbf{0.65}$ & $0.23$ & $0.36$ & $0.41$ & $24.8\%$ \\
\rowcolor{gray!12} \cellcolor{white} & \ourmethodlight~& $\mathbf{8.35}$ & $\mathbf{24.94}$ & $\mathbf{18.97}$ & $\mathbf{0.26}$ & $\mathbf{0.62}$ & $\mathbf{0.31}$ & $\mathbf{0.62}$ & $\mathbf{0.65}$ & $\mathbf{0.23}$ & $\mathbf{0.41}$ & $\mathbf{0.44}$ & $\mathbf{19.1\%}$ \\
\arrayrulecolor{gray!75}\cmidrule(lr){2-14}
& {Dobi-SVD$^{\ddagger, q}$} & {$8.12$} & $43.85$ & $12.63$ & $0.28$ & $0.65$ & $0.32$ & $0.62$ & $0.72$ & {$0.25$} & $0.45$ & $0.47$ & $14.1\%$ \\
\rowcolor{gray!12} \cellcolor{white} & \ourmethodlight$^q$ & $\mathbf{7.09}$ & $\mathbf{11.07}$ & $\mathbf{11.25}$ & $\mathbf{0.28}$ & $\mathbf{0.71}$ & $\mathbf{0.37}$ & $\mathbf{0.65}$ & $\mathbf{0.73}$ & $\mathbf{0.26}$ & $\mathbf{0.49}$ & $\mathbf{0.50}$ & $\mathbf{8.9\%}$ \\
\specialrule{\lightrulewidth}{2pt}{2pt}
\multirow{5}{*}{$0.4$}
& ASVD & $57057$ & $45218$ & $43036$ & $0.12$ & $0.26$ & $0.21$ & $0.49$ & $0.53$ & $0.18$ & $0.26$ & $0.29$ & $46.5\%$ \\
& SVD-LLM$^\dagger$ & $53.74$ & $438.58$ & $383.07$ & $0.14$ & $0.28$ & $0.22$ & $0.50$ & ${0.55}$ & $0.21$ & $0.27$ & $0.31$ & $43.3\%$ \\
& {Dobi-SVD$^\ddagger$} & {$46.18$} & {$238.91$} & $190.62$ & {$0.15$} & {$0.31$} & {$0.20$} & {${0.52}$} & ${0.54}$ & {$0.22$} & $0.28$ & $0.32$ & $42.0\%$ \\
& SAES-SVD & $22.01$ & $116.83$ & $93.97$ & $0.16$ & $0.33$ & $\mathbf{0.25}$ & $0.52$ & $0.54$ & $\mathbf{0.23}$ & $0.30$ & $0.33$ & $39.2\%$ \\
\rowcolor{gray!12} \cellcolor{white} & \ourmethodlight~& $\mathbf{13.67}$ & $\mathbf{74.64}$ & $\mathbf{46.14}$ & $\mathbf{0.19}$ & $\mathbf{0.44}$ & $0.23$ & $\mathbf{0.55}$ & $\mathbf{0.60}$ & $\mathbf{0.23}$ & $\mathbf{0.32}$ & $\mathbf{0.37}$ & $\mathbf{33.2\%}$ \\
\arrayrulecolor{gray!75}\cmidrule(lr){2-14}
& {Dobi-SVD$^{\ddagger, q}$} & {$9.95$} & $67.62$ & $\mathbf{17.94}$ & $0.23$ & $0.52$ & $0.24$ & $0.56$ & $\mathbf{0.65}$ & {$0.23$} & $0.38$ & $0.40$ & $26.6\%$ \\
\rowcolor{gray!12} \cellcolor{white} & \ourmethodlight$^q$ & $\mathbf{8.61}$ & $\mathbf{24.44}$ & ${19.69}$ & $\mathbf{0.26}$ & $\mathbf{0.58}$ & $\mathbf{0.31}$ & $\mathbf{0.62}$ & ${0.64}$ & $\mathbf{0.23}$ & $\mathbf{0.41}$ & $\mathbf{0.44}$ & $\mathbf{20.4\%}$ \\
\bottomrule
\end{tabular}
}
\end{table*}

\section{Experiments}
\label{sec:exp}

We evaluate \ourmethod\ across a diverse set of open-source pretrained language models, spanning multiple architecture families and parameter scales.
Concretely, we compress models from the LLaMA~\citep{touvron2023LLaMA} and Qwen~\citep{bai2023qwen} families, which together cover a broad range of model sizes and training recipes representative of the current landscape.
For calibration, we follow prior work and use 256 samples drawn from the WikiText2~\citep{merity2016pointer} training split unless otherwise stated; our ablations show this modest budget is sufficient for stable compression.
Compressed models are then evaluated along two axes: \emph{language modeling perplexity} on WikiText2, C4~\citep{raffel2020exploring}, and PTB~\citep{marcinkiewicz1994building}, which measures how well the model preserves distributional fidelity; and \emph{zero-shot accuracy on commonsense reasoning} benchmarks --- Winogrande~\citep{sakaguchi2021winogrande}, PIQA~\citep{bisk2020piqa}, ARC-Easy and ARC-Challenge~\citep{clark2018think}, OpenBookQA~\citep{mihaylov2018openbookqa}, HellaSwag~\citep{zellers2019hellaswag} and MathQA~\citep{amini2019mathqa} --- which captures practical downstream utility. 

\subsection{Main Results}

Table~\ref{tab:LLaMA-1-7B} presents a detailed comparison on LLaMA-7B against five SVD-based baselines---ASVD, SVD-LLM, Dobi-SVD, Dip-SVD, and SAES-SVD---across three perplexity benchmarks and seven zero-shot commonsense reasoning tasks at compression ratios of $0.8$, $0.6$, and $0.4$.
Table~\ref{tab:multiple_models_summary} reports aggregated results across five additional models spanning the LLaMA-2, LLaMA-3, and Qwen-2.5 families; expanded per-benchmark breakdowns are provided in Appendix~\ref{sec:comparison_more_models}. We also include results with Dobi-SVD-style remapping enabled for both Dobi-SVD and \ourmethodlight~for a fair comparison; more details on remapping are provided in Appendix~\ref{sec:appendix_dobi_svd_remapping}.

At ratio $0.8$, \ourmethodlight~achieves the best perplexity and average accuracy among all methods without weight remapping, with the nearest competitor (SAES-SVD) incurring a notably larger accuracy drop; enabling weight remapping (\ourmethodlight$^q$) further reduces the accuracy gap to only $3.4\%$, outperforming Dobi-SVD$^{\ddagger,q}$ on both metrics despite Dobi-SVD employing dynamic rank allocation.
As compression becomes more aggressive the margin widens: at ratio $0.6$, \ourmethodlight~reduces perplexity substantially across all three benchmarks while matching or exceeding SAES-SVD on every reasoning task, and with remapping, out-of-domain perplexity (PTB) improves by a particularly large factor over Dobi-SVD$^{\ddagger,q}$.
At ratio $0.4$, ASVD and SVD-LLM become essentially degenerate, while \ourmethodlight~continues to produce functional compressed models, reducing perplexity by nearly $40\%$ relative to SAES-SVD and cutting the accuracy drop by roughly six points.

The gains generalize broadly across architectures (Table~\ref{tab:multiple_models_summary}).
\ourmethodlight~outperforms SVD-LLM on every model family at both evaluated ratios, with the largest gap on LLaMA-3-1B, where SVD-LLM's perplexity degrades by a factor of three---suggesting compact modern architectures are especially sensitive to per-layer approximation error and benefit most from block-level joint optimization.
At ratio $0.6$, SVD-LLM collapses on both LLaMA-3 models, while \ourmethodlight~retains functional representations throughout.
These results consistently demonstrate state-of-the-art performance across ratios, metrics, and model families, with gains most pronounced precisely where competing methods fail---underscoring the importance of minimizing block-level output error rather than compressing each layer in isolation.
\begin{table*}[t]
\centering
\caption{Comparison of \ourmethod~with SOTA methods across multiple models at compression ratios $0.8$ and $0.6$. PPL refers to WikiText2 perplexity; Accuracy is averaged over seven commonsense reasoning benchmarks (zero-shot). Best performance is marked in bold.
}
\label{tab:multiple_models_summary}
\resizebox{\textwidth}{!}{
\begin{tabular}{c l !{\hspace{3pt}\vline\hspace{3pt}} cc !{\hspace{3pt}\vline\hspace{3pt}} cc !{\hspace{3pt}\vline\hspace{3pt}} cc !{\hspace{3pt}\vline\hspace{3pt}} cc !{\hspace{3pt}\vline\hspace{3pt}} cc}
\toprule
\multirow{2}{*}{\textbf{Ratio}} & \multirow{2}{*}{\textbf{Method}}
  & \multicolumn{2}{c}{\textbf{LLaMA-2-7B}}
  & \multicolumn{2}{c}{\textbf{LLaMA-2-13B}}
  & \multicolumn{2}{c}{\textbf{LLaMA-3-1B}}
  & \multicolumn{2}{c}{\textbf{LLaMA-3-8B}}
  & \multicolumn{2}{c}{\textbf{Qwen-2.5-7B}} \\
\cmidrule(lr){3-4} \cmidrule(lr){5-6} \cmidrule(lr){7-8} \cmidrule(lr){9-10} \cmidrule(lr){11-12}
  & & \textbf{PPL} ($\downarrow$) & \textbf{Acc.} ($\uparrow$)
  & \textbf{PPL} ($\downarrow$) & \textbf{Acc.} ($\uparrow$)
  & \textbf{PPL} ($\downarrow$) & \textbf{Acc.} ($\uparrow$)
  & \textbf{PPL} ($\downarrow$) & \textbf{Acc.} ($\uparrow$)
  & \textbf{PPL} ($\downarrow$) & \textbf{Acc.} ($\uparrow$) \\
\specialrule{\lightrulewidth}{2pt}{2pt}
$1.0$ & Baseline
  & $5.47$ & $0.55$
  & $4.88$ & $0.58$
  & $9.75$ & $0.48$
  & $6.24$ & $0.60$
  & $6.84$ & $0.60$ \\
\specialrule{\lightrulewidth}{2pt}{2pt}
\multirow{1}{*}{$0.8$}
& SVD-LLM
  & $8.41$ & $0.43$
  & $6.65$ & $0.48$
  & $45.62$ & $0.32$
  & $14.16$ & $0.44$
  & $10.69$ & $0.47$ \\
\rowcolor{gray!12} \cellcolor{white} & \ourmethodlight
  & $\mathbf{6.84}$ & $\mathbf{0.50}$
  & $\mathbf{5.95}$ & $\mathbf{0.53}$
  & $\mathbf{15.12}$ & $\mathbf{0.39}$
  & $\mathbf{9.58}$ & $\mathbf{0.50}$
  & $\mathbf{8.53}$ & $\mathbf{0.53}$ \\
\specialrule{\lightrulewidth}{2pt}{2pt}
\multirow{1}{*}{$0.6$}
& SVD-LLM
  & $16.47$ & $0.35$
  & $10.79$ & $0.38$
  & $402.76$ & $0.30$
  & $76.31$ & $0.32$
  & $28.67$ & $0.33$ \\
\rowcolor{gray!12} \cellcolor{white} & \ourmethodlight
  & $\mathbf{8.55}$ & $\mathbf{0.44}$
  & $\mathbf{7.44}$ & $\mathbf{0.46}$
  & $\mathbf{23.74}$ & $\mathbf{0.35}$
  & $\mathbf{13.66}$ & $\mathbf{0.41}$
  & $\mathbf{11.00}$ & $\mathbf{0.44}$ \\
\bottomrule
\end{tabular}
}
\end{table*}
\subsection{Comparison with pruning methods}
Table~\ref{tab:pruning_comparison_1} compares zero-shot accuracy on LLaMA-2-7B against four structured pruning methods---LLM-Pruner~\citep{ma2023llm}, SliceGPT~\citep{ashkboos2024slicegpt}, Bonsai~\citep{kolawole2024everybody}, and Wanda-sp~\citep{sun2023simple}---at ratios $0.6$ and $0.5$, and Table~\ref{tab:pruning_comparison_2} reports WikiText2 perplexity on LLaMA-7B under fixed GPU memory budgets and compares \ourmethod~against LLM-Pruner, SliceGPT and BlockPruner~\citep{zhong2025blockpruner}. Together, they situate \ourmethodlight~relative to methods that remove entire model components and therefore benefit from dense-kernel efficiency at inference time.
Without remapping, \ourmethodlight~is competitive with the best pruning methods at ratio $0.6$ (only a $19.8\%$ accuracy drop vs.\ $18.3\%$ for Bonsai), a notable result given that SVD-LLM lags substantially behind all pruning baselines at the same ratio ($37.5\%$ drop); with remapping, \ourmethodlight$^q$ surpasses every pruning method by a clear margin, achieving a $7.1\%$ accuracy drop at ratio $0.6$---less than half that of Bonsai---and $20.7\%$ at ratio $0.4$, competitive with Bonsai's performance at the less aggressive setting.
The memory-budget comparison tells a similar story: \ourmethodlight~achieves the lowest perplexity at every budget from $10$GB down to $7$GB, and the advantage over pruning methods grows as the budget tightens, with structured pruning baselines deteriorating far more sharply under stricter constraints.

\begin{table*}[t]
\centering
\caption{Comparison of \ourmethod~with structured pruning methods on compression performance of LLaMA-2-7B across five commonsense reasoning benchmarks (zero-shot evaluation). Results for baseline methods are taken from \citet{Wang2025DobiSVD}.\\
}
\label{tab:pruning_comparison_1}
\resizebox{0.7\textwidth}{!}{
\begin{tabular}{c l !{\hspace{3pt}\vline\hspace{3pt}} ccccccc}
\toprule
\multirow{2}{*}{\textbf{Ratio}} & \multirow{2}{*}{\textbf{Method}} & \multicolumn{7}{c}{\textbf{Accuracy} ($\uparrow$)} \\
\cmidrule(lr){3-9}
 & & \textbf{PIQA} & \textbf{HellaS.} & \textbf{WinoG.} & \textbf{ARC\_e} & \textbf{ARC\_c} & \textbf{Avg.} & \textbf{Drop (\%)} \\
\specialrule{\lightrulewidth}{2pt}{2pt}
$1.0$ & Dense & $0.78$ & $0.57$ & $0.69$ & $0.76$ & $0.43$ & $0.65$ & $-$ \\
\specialrule{\lightrulewidth}{2pt}{2pt}
\multirow{8}{*}{$0.6$}
& LLM-Pruner  & $0.70$ & $0.41$ & $0.53$ & $0.53$ & $0.27$ & $0.48$ & $24.5\%$ \\
& SliceGPT    & $0.65$ & $0.57$ & $0.60$ & $0.43$ & $0.32$ & $0.51$ & $20.4\%$ \\
& Bonsai      & $0.72$ & $0.45$ & $0.58$ & $0.59$ & $0.30$ & $0.53$ & $18.3\%$ \\
& Wanda-sp    & $0.70$ & $0.42$ & $0.53$ & $0.57$ & $0.29$ & $0.50$ & $22.3\%$ \\
\arrayrulecolor{gray!75}\cmidrule(lr){2-9}
\arrayrulecolor{black}
& SVD-LLM    & $0.58$ & $0.31$ & $0.53$ & $0.39$ & $0.21$ & $0.40$ & $37.5\%$ \\
\rowcolor{gray!12} \cellcolor{white} & \ourmethodlight& ${0.66}$ & ${0.41}$ & ${0.62}$ & ${0.60}$ & ${0.30}$ & ${0.52}$ & ${19.8\%}$ \\
\arrayrulecolor{gray!75}\cmidrule(lr){2-9}
\arrayrulecolor{black}
& {Dobi-SVD$^{\ddagger, q}$}    & $0.72$ & $0.45$ & $0.64$ & $0.67$ & $0.31$ & $0.56$ & $13.6\%$ \\
\rowcolor{gray!12} \cellcolor{white} & \ourmethodlight$^q$& ${0.73}$ & ${0.50}$ & ${0.66}$ & ${0.72}$ & ${0.39}$ & ${0.60}$ & ${7.1\%}$ \\
\specialrule{\lightrulewidth}{2pt}{2pt}
\multirow{4}{*}{$0.5$}
& LLM-Pruner & $0.67$ & $0.35$ & $0.52$ & $0.48$ & $0.22$ & $0.45$ & $30.7\%$ \\
& SliceGPT & $0.58$ & $0.46$ & $0.55$ & $0.37$ & $0.28$ & $0.45$ & $30.7\%$ \\
& Bonsai & $0.66$ & $0.40$ & $0.54$ & $0.49$ & $0.26$ & $0.47$ & $27.2\%$ \\
& Wanda-sp & $0.63$ & $0.32$ & $0.53$ & $0.43$ & $0.20$ & $0.42$ & $34.7\%$ \\
\specialrule{\lightrulewidth}{2pt}{2pt}
& SVD-LLM    & $0.53$ & $0.27$ & $0.49$ & $0.27$ & $0.22$ & $0.36$ & $44.9\%$ \\
\rowcolor{gray!12} \cellcolor{white} $0.4$ & \ourmethodlight& ${0.60}$ & ${0.32}$ & ${0.56}$ & ${0.44}$ & ${0.24}$ & ${0.43}$ & ${33.1\%}$ \\
\arrayrulecolor{gray!75}\cmidrule(lr){2-9}
\arrayrulecolor{black}
& {Dobi-SVD$^{\ddagger, q}$}    & $0.67$ & $0.38$ & $0.57$ & $0.55$ & $0.26$ & $0.49$ & $24.8\%$ \\
\rowcolor{gray!12} \cellcolor{white} & \ourmethodlight$^q$& ${0.65}$ & ${0.40}$ & ${0.61}$ & ${0.60}$ & ${0.30}$ & ${0.51}$ & ${20.7\%}$ \\
\bottomrule
\end{tabular}
}
\end{table*}

\begin{table}[t]
\centering
\caption{Perplexity (WikiText2, $\downarrow$) comparison of \ourmethod~and structured pruning baselines on LLaMA-7B under different memory budgets. Results for baseline methods are taken from \citet{hu2026saes}.}
\label{tab:pruning_comparison_2}
\resizebox{0.82\textwidth}{!}{
\begin{tabular}{c ccccc}
\toprule
\textbf{Memory} & \textbf{LLM-Pruner} & \textbf{SliceGPT} & \textbf{BlockPruner} & \textbf{SAES-SVD} & \textbf{\ourmethod~(Ours)} \\
\midrule
10GB & $9.88$ & $8.78$ & $9.40$ & ${7.17}$ & $\mathbf{6.89}$ \\
9GB  & $12.21$ & $12.73$ & $12.76$ & ${8.22}$ & $\mathbf{7.14}$ \\
8GB  & $18.94$ & $16.39$ & $19.78$ & ${8.96}$ & $\mathbf{7.84}$ \\
7GB  & $21.68$ & $27.41$ & $43.05$ & ${10.15}$ & $\mathbf{8.35}$  \\
\bottomrule
\end{tabular}
}
\end{table}


\subsection{Ablations and Analysis}

\begin{wraptable}{r}{0.5\textwidth}
\centering
\caption{WikiText2 Perplexity and average accuracy over seven commonsense reasoning tasks for \ourmethodlight~with different layer-wise objectives and block-level refinement. Best performance is marked in bold.
}
\label{tab:ablation_objective}
\resizebox{\linewidth}{!}{%
\begin{tabular}{c c c !{\hspace{3pt}\vline\hspace{3pt}} >{\centering\arraybackslash}p{1.5cm} >{\centering\arraybackslash}p{1.5cm}} 
\toprule
\multirow{1}{*}{\textbf{Ratio}} & \multirow{1}{*}{\textbf{Objective}} & \multirow{1}{*}{\textbf{Refinement}} & \multirow{1}{*}{\textbf{PPL} ($\downarrow$)} & \multirow{1}{*}{\textbf{Acc.} ($\uparrow$)} \\
\specialrule{\lightrulewidth}{2pt}{2pt}
$1.0$ & Dense & $-$
  & $5.68$ & $0.55$ \\
\specialrule{\lightrulewidth}{2pt}{2pt}
\multirow{1}{*}{$0.8$}
  & $\|\bm{W} - \bm{W}'\|_F^2$ & \ding{55}
  & \multicolumn{1}{l}{$2e4$} & \multicolumn{1}{l}{$0.31$} \\
  &  & \ding{51}
  & \multicolumn{1}{c}{$7.35$} & \multicolumn{1}{c}{$0.50$} \\
  \arrayrulecolor{gray!75}\cmidrule(lr){2-5}
  \arrayrulecolor{black}
  & $\|\bm{W}\mX - \bm{W}'\mX\|_F^2$ & \ding{55}
  & \multicolumn{1}{l}{$7.89$} & \multicolumn{1}{l}{$0.45$} \\
  &  & \ding{51}
  & \multicolumn{1}{c}{$\mathbf{6.89}$} & \multicolumn{1}{c}{$\mathbf{0.50}$} \\
  \arrayrulecolor{gray!75}\cmidrule(lr){2-5}
  \arrayrulecolor{black}
  & $\|\bm{W}\mX' - \bm{W}'\mX'\|_F^2$ & \ding{55}
  & \multicolumn{1}{l}{$8.22$} & \multicolumn{1}{l}{$0.45$} \\
  &  & \ding{51}
  & \multicolumn{1}{c}{$7.28$} & \multicolumn{1}{c}{$0.45$} \\
  \arrayrulecolor{gray!75}\cmidrule(lr){2-5}
  \arrayrulecolor{black}
  & $\|\bm{W}\mX - \bm{W}'\mX'\|_F^2$ & \ding{55}
  & \multicolumn{1}{l}{$\mathbf{7.68}$} & \multicolumn{1}{l}{$\mathbf{0.46}$} \\
  &  & \ding{51}
  & \multicolumn{1}{c}{$7.08$} & \multicolumn{1}{c}{$0.48$} \\
\specialrule{\lightrulewidth}{2pt}{2pt}
\multirow{1}{*}{$0.6$}
  & $\|\bm{W} - \bm{W}'\|_F^2$ & \ding{55}
  & \multicolumn{1}{l}{$5e5$} & \multicolumn{1}{l}{$0.30$} \\
  & & \ding{51}
  & \multicolumn{1}{c}{$10.93$} & \multicolumn{1}{c}{$\mathbf{0.45}$} \\
  \arrayrulecolor{gray!75}\cmidrule(lr){2-5}
  \arrayrulecolor{black}
  & $\|\bm{W}\mX - \bm{W}'\mX\|_F^2$ & \ding{55}
  & \multicolumn{1}{l}{$13.11$} & \multicolumn{1}{l}{$0.37$} \\
  & & \ding{51}
  & \multicolumn{1}{c}{$\mathbf{8.35}$} & \multicolumn{1}{c}{$0.44$} \\
  \arrayrulecolor{gray!75}\cmidrule(lr){2-5}
  \arrayrulecolor{black}
  & $\|\bm{W}\mX' - \bm{W}'\mX'\|_F^2$ & \ding{55}
  & \multicolumn{1}{l}{$14.87$} & \multicolumn{1}{l}{$0.36$} \\
  &  & \ding{51}
  & \multicolumn{1}{c}{$8.54$} & \multicolumn{1}{c}{$0.44$} \\
  \arrayrulecolor{gray!75}\cmidrule(lr){2-5}
  \arrayrulecolor{black}
  & $\|\bm{W}\mX - \bm{W}'\mX'\|_F^2$ & \ding{55}
  & \multicolumn{1}{l}{$\mathbf{12.19}$} & \multicolumn{1}{l}{$\mathbf{0.38}$} \\
  & & \ding{51}
  & \multicolumn{1}{c}{$8.52$} & \multicolumn{1}{c}{$0.43$} \\
\bottomrule
\end{tabular}%
}
\end{wraptable}
\paragraph{Impact of linear layer compression objective and refinement.}

Table~\ref{tab:ablation_objective} isolates the contributions of the layer-wise objective and the block-level refinement step across four objective variants and two compression ratios on LLaMA-7B.
Without refinement, the input-agnostic objective ($\|\bm{W} - \bm{W}'\|_F^2$) is completely degenerate at both ratios, confirming that minimizing weight distance in isolation is insufficient for preserving model behavior.
The input-aware ($\|\bm{W}\mX - \bm{W}'\mX\|_F^2$) and the adaptive and anchored ($\|\bm{W}\mX - \bm{W}'\mX'\|_F^2$) objectives both recover reasonable performance without refinement, with the adaptive and anchored objective performing slightly better in that setting.
The shift-aware objective ($\|\bm{W}\mX' - \bm{W}'\mX'\|_F^2$) performs comparably to the input-aware variant without refinement but does not surpass it after refinement, suggesting that anchoring the reference at the compressed input does not provide additional benefit once block-level optimization is applied.
Refinement consistently and substantially improves all objectives: most strikingly, it rescues the input-agnostic objective from degeneracy and lifts the input-aware objective to the best overall performance at both ratios---$6.89$ PPL and $0.50$ accuracy at ratio $0.8$, and $8.35$ PPL at ratio $0.6$---establishing that block-level joint optimization is the dominant factor driving final quality.
These results indicate that the choice of layer-wise objective matters both with and without refinement: without refinement, it directly determines compression quality, while with refinement it serves as the initialization for block-level refinement, and the final performance remains sensitive to this initialization. Accordingly, our method pairs the input-aware objective with block-level refinement.
\paragraph{Impact of Number of Calibration Samples.}
Figure~\ref{fig:calib_samples} shows the effect of calibration set size on WikiText2 perplexity (left), C4 perplexity (middle), and average zero-shot accuracy (right) at ratios $0.8$ and $0.6$.
Perplexity and accuracy exhibit qualitatively different behaviors: perplexity drops sharply with the first $\sim$64 samples and largely saturates thereafter at both ratios, indicating that a small calibration set is sufficient for good language modeling fidelity.
Accuracy, by contrast, continues to improve substantially beyond 64 samples---particularly at ratio $0.8$, where accuracy rises steeply between 64 and 128 samples before gradually plateauing---suggesting that downstream task performance is more sensitive to calibration budget.
Our default of 256 samples strikes a practical balance: perplexity has saturated and accuracy is near its plateau, while the calibration cost remains modest.

\begin{figure}[t]
    \centering
    \includegraphics[width=1.\linewidth]{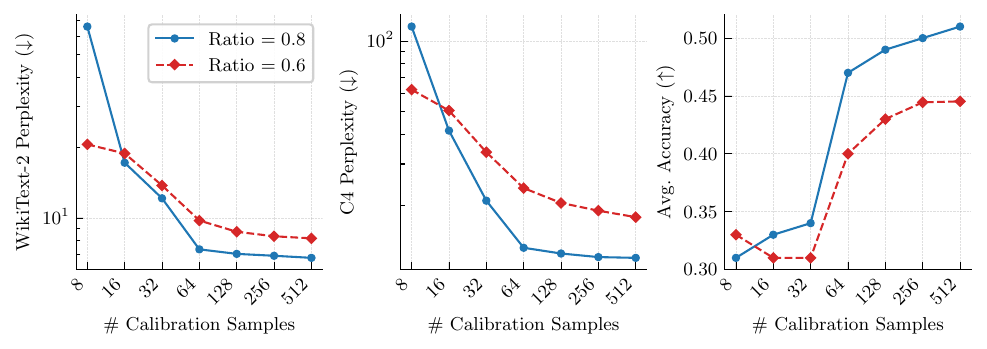}
    \caption{Impact of calibration set size on compression performance. Performance is measured by perplexity on WikiText2 (left) and C4 (middle), and average accuracy across seven zero-shot reasoning tasks (right).}
    \label{fig:calib_samples}
\end{figure}
%
\paragraph{Error Evolution Across Layers.}
To better understand how compression affects internal representations, we track the MSE and cosine 
distance between original and compressed outputs across depth, compressing LLaMA-7B at ratio $0.8$ 
and evaluating on WikiText2 test split samples not used for calibration.
Figure~\ref{fig:error_evolution} reports results for attention output projections, MLP-down projections, and full block outputs, comparing \ourmethodlight~against naive SVD and SVD-LLM (see also Figure~\ref{fig:main_page_figure}).
Naive SVD fails catastrophically from the very first layers: cosine distance immediately saturates 
near $1.0$, indicating that compressed outputs are nearly orthogonal to the originals, with MSE 
several orders of magnitude above both data-driven methods.
SVD-LLM recovers substantially, but its MSE and cosine distance still grow monotonically with depth 
and exhibit notable spikes in the deeper layers.
\ourmethodlight~achieves the lowest error throughout the network on both metrics: cosine distance 
rises gradually to $\sim$0.1--0.15 and then plateaus in later layers, 
while MSE remains consistently below SVD-LLM across all layer indices.
The block-output MSE tells a similar story at a larger scale, reflecting cumulative error across 
all compressed layers within each transformer block---\ourmethodlight~grows more slowly and maintains 
a clear margin over SVD-LLM at every depth.
These results confirm that anchoring compression to the original outputs while accounting for 
shifted inputs curbs error accumulation across depth, which directly underlies the stronger end-task performance observed in the main results.

\begin{figure*}[t]
    \centering
    \begin{subfigure}{0.33\linewidth}
        \centering
        \includegraphics[width=\linewidth]{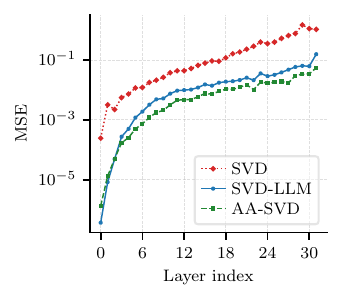}
        \caption{O proj (MSE)}
        \label{fig:q_rel_norm}
    \end{subfigure}%
    \hspace{0.003\linewidth}%
    \begin{subfigure}{0.33\linewidth}
        \centering
        \includegraphics[width=\linewidth]{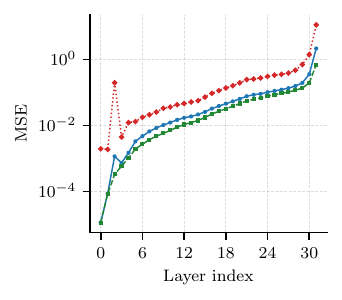}
        \caption{MLP down (MSE)}
        \label{fig:mlp_rel_norm}
    \end{subfigure}%
    \hspace{0.003\linewidth}%
    \begin{subfigure}{0.33\linewidth}
        \centering
        \includegraphics[width=\linewidth]{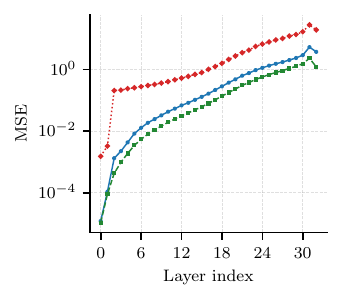}
        \caption{Block outputs (MSE)}
        \label{fig:block_rel_norm}
    \end{subfigure}

    \vspace{1mm}
    \begin{subfigure}{0.33\linewidth}
        \centering
        \includegraphics[width=\linewidth]{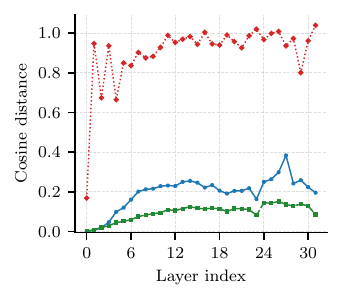}
        \caption{O proj (cosine distance)}
        \label{fig:q_cosine}
    \end{subfigure}%
    \hspace{0.003\linewidth}%
    \begin{subfigure}{0.33\linewidth}
        \centering
        \includegraphics[width=\linewidth]{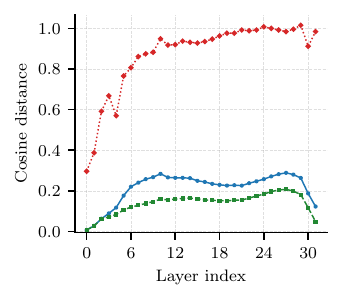}
        \caption{MLP down (cosine distance)}
        \label{fig:mlp_cosine}
    \end{subfigure}%
    \hspace{0.003\linewidth}%
    \begin{subfigure}{0.33\linewidth}
        \centering
        \includegraphics[width=\linewidth]{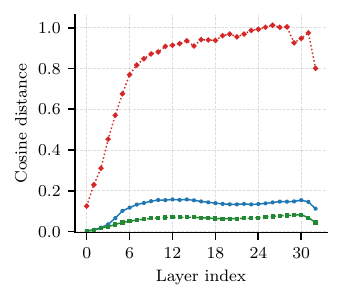}
        \caption{Block outputs (cosine distance)}
        \label{fig:block_cosine}
    \end{subfigure}
    \caption{Layer-wise error evolution across LLaMA-7B at ratio $0.8$, evaluated on WikiText2 test split samples. Top row: MSE between original and compressed outputs. Bottom row: cosine distance between original and compressed outputs. Results are shown separately for attention output projections (O-proj), MLP-down projections, and full block outputs.}
    \label{fig:error_evolution}
\end{figure*}

\section{Limitations and Future Work}
\label{sec:limitations}

Our current method applies a uniform compression ratio across all layers, with per-layer ranks derived from that ratio rather than optimized individually.
Existing capacity allocation methods are not directly applicable to our setting: sensitivity-based approaches evaluate each layer in isolation, but our final compressed factors emerge from block-level joint refinement, making per-layer sensitivity a poor proxy for the true cost of reducing a layer's capacity.
More broadly, because our sequential compression propagates compressed activations forward, the optimal ratio for any layer depends on how upstream layers were compressed — a dependency that per-layer allocation schemes do not account for.
Developing capacity allocation strategies that are aware of block-level refinement is a promising direction for further improving compression performance at a fixed parameter budget.

Beyond capacity allocation, our method also focuses exclusively on low-rank factorization.
Structured pruning removes entire heads or channels, quantization reduces numerical precision, and low-rank factorization compresses weight matrices — each targeting a distinct source of redundancy.
These techniques are largely orthogonal and can in principle be composed: for instance, low-rank factors could be quantized post-compression, or pruning could be applied to reduce capacity before factorization.
Exploring hybrid pipelines that combine these methods, potentially within a unified block-level optimization framework, is a natural and promising direction for future work.

\section{Conclusion}
\label{sec:conclusion}

We introduced a post-training framework for compressing large language models based on low-rank matrix factorization, with two key contributions.
First, we proposed an anchored and adaptive layer-wise compression objective with a closed-form solution, efficient to compute with a small calibration set.
Second, and more critically, we introduced block-level refinement, which jointly optimizes all compressed layers within a transformer block to minimize the block output error---accounting for interactions between compressed layers that layer-wise objectives cannot capture.
Extensive experiments across LLaMA and Qwen model families at multiple scales, evaluated on language modeling and commonsense reasoning benchmarks, show that our approach consistently outperforms prior SVD-based methods.
At moderate compression ratios our method preserves accuracy with negligible loss, while under aggressive compression it widens the gap to baselines.
Overall, our study demonstrates that block-level refinement is the dominant factor enabling effective compression, and that careful initialization via a good layer-wise objective further improves final performance.
Together, these results establish block-level optimization as a powerful and underexplored paradigm for structured model compression. We hope it contributes toward practical deployment of large-scale pretrained models and inspires further work in this direction.




\bibliography{main}
\bibliographystyle{tmlr}

\newpage
\appendix
\section{Proofs and Discussion}
\label{sec:proof-lowrank}

\subsection{Proof of Theorem~\ref{thm:lowrank-cholesky-unreg}}

\paragraph{Problem statement.}
Let $\mW\in\mathbb{R}^{m\times n}$ be a fixed weight matrix and $\mA,\mB\in\mathbb{R}^{n\times l}$ be any two matrices with $\mB\mB^\top$ invertible.
Fix a target rank $k\in\mathbb{N}$.
We seek a solution to
\[
\min_{\operatorname{rank}(\mW')\le k}\;\bigl\|\,\mW\mA - \mW'\mB\,\bigr\|_F^2.
\]
We claim that, for any invertible $\mL_\mB$ satisfying $\mB\mB^\top = \mL_\mB\mL_\mB^\top$, an optimal solution is
\[
\mW'^{\star} \;=\; \operatorname{SVD}_k\!\Bigl(\mW\mA\mB^{\top}\bigl(\mB\mB^{\top}\bigr)^{-1}\mL_\mB\Bigr)\,\mL_\mB^{-1}.
\]

\paragraph{Proof.}
Since $\mB\mB^\top$ is symmetric positive definite, an invertible $\mL_\mB$ with $\mB\mB^\top = \mL_\mB\mL_\mB^\top$ always exists; two concrete choices are (i) the lower-triangular Cholesky factor, or (ii) $\mQ\mLambda^{1/2}$ from the eigendecomposition $\mB\mB^\top = \mQ\mLambda\mQ^\top$.

Expanding the squared Frobenius norm gives
\[
\|\mW\mA - \mW'\mB\|_F^2
= \operatorname{tr}(\mW'\mB\mB^{\top}\mW'^{\top}) - 2\,\operatorname{tr}(\mW\mA\mB^{\top}\mW'^{\top}) + \|\mW\mA\|_F^2.
\]
Since $\mB\mB^\top = \mL_\mB\mL_\mB^\top$, the first term equals $\|\mW'\mL_\mB\|_F^2$. Completing the square yields
\[
\|\mW'\mL_\mB - \mW\mA\mB^{\top}\mL_\mB^{-\top}\|_F^2 - \|\mW\mA\mB^{\top}\mL_\mB^{-\top}\|_F^2 + \|\mW\mA\|_F^2.
\]
Setting $\mM := \mW\mA\mB^\top(\mB\mB^\top)^{-1}\mL_\mB = \mW\mA\mB^\top\mL_\mB^{-\top}$, minimizing \eqref{eq:obj-unreg} is equivalent to minimizing $\|\mW'\mL_\mB - \mM\|_F^2$ subject to $\operatorname{rank}(\mW'\mL_\mB)\le k$.
Because $\mL_\mB$ is invertible, $\operatorname{rank}(\mW'\mL_\mB)=\operatorname{rank}(\mW')$.
By Lemma~\ref{lem:eym}, the unique minimizer of $\|\mZ - \mM\|_F^2$ over rank-$k$ matrices $\mZ$ is $\operatorname{SVD}_k(\mM)$, so $\mW'\mL_\mB = \operatorname{SVD}_k(\mM)$, giving
\[
\mW'^{\star} = \operatorname{SVD}_k(\mM)\,\mL_\mB^{-1},
\]
and the minimal value $\|\mW\mA\|_F^2 - \|\mM\|_F^2 + \sum_{i>k}\sigma_i(\mM)^2$ follows directly from Lemma~\ref{lem:eym}. \qed
\\

\begin{remark*}[Rank-deficient $\mB$]
If $\mB\mB^\top$ is singular, an invertible $\mL_\mB$ satisfying $\mB\mB^\top = \mL_\mB\mL_\mB^\top$ does not exist.
In this case replace $\mL_\mB^{-1}$ by the Moore--Penrose factor $(\mB\mB^\top)^{+1/2}$,
or equivalently use a Tikhonov-regularized factorization
$\mB\mB^\top+\varepsilon \mI = \mL_\varepsilon\mL_\varepsilon^\top$ and let
$\varepsilon\to 0^+$. The same argument then shows that
$\mW'^{\star} = \operatorname{SVD}_k(\mM)\,(\mB\mB^\top)^{+1/2}$
with $\mM := \mW\mA\mB^\top(\mB\mB^\top)^{+1/2}$
is a minimum-norm optimizer, with minimal value given by the same formula.
\end{remark*}

\subsection{Discussion of Corollary~\ref{cor:no-shift}}

Corollary~\ref{cor:no-shift} applies whenever $\mA = \mB$, so that $\mA\mB^\top = \mB\mB^\top = \mL_\mB\mL_\mB^\top$ and the general solution reduces to the whitening-based form
\[
\mW'^{\star} = \operatorname{SVD}_k\!\bigl(\mW\mL_\mB\bigr)\,\mL_\mB^{-1}.
\]
Two natural instantiations arise in our setting: setting $\mB = \mX$ (original inputs) yields an input-aware solution, while setting $\mB = \mX'$ (shifted inputs) gives a shift-aware variant that adapts to the upstream-compressed distribution.
SVD-LLM~\citep{wang2025svdllm} and SVD-LLM V2~\citep{wang2025svdllmv2} both correspond to the $\mB = \mX$ case, differing only in their factorization of $\mX\mX^\top$: SVD-LLM uses the lower-triangular Cholesky factor $\mL_\mX$, while SVD-LLM V2 uses the eigendecomposition $\mX\mX^\top = \mQ\mLambda\mQ^\top$ with $\mL_\mX = \mQ\mLambda^{1/2}$, giving
\[
\mW'^{\star} = \operatorname{SVD}_k\!\bigl(\mW\mQ\mLambda^{1/2}\bigr)\,\mLambda^{-1/2}\mQ^\top.
\]

Since the official SVD-LLM V2 implementation was not publicly available at the time of 
writing, we reproduced it from the paper description. Our reproduction showed no discernible 
performance difference relative to SVD-LLM under either homogeneous or heterogeneous compression 
ratio settings; we therefore report SVD-LLM as the representative baseline for this line of work. 
This choice is consistent with more recent methods, including DipSVD~\citep{ding2025dipsvd} 
and SAES-SVD~\citep{hu2026saes}, which similarly do not report V2 results.

\section{Implementation Details}
\label{sec:appendix_implementation}

\subsection{Linear Layer Compression}
\label{sec:appendix_linear}

Theorem~\ref{thm:lowrank-cholesky-unreg} establishes that the optimal
rank-$k$ compressed operator is obtained by whitening the modified
inputs $\mX'$ via their covariance, projecting the cross-term $\mW\mX$ into
this whitened space, applying truncated SVD, and mapping back. This
closed-form solution generalizes the classical whitening construction
($\mX'=\mX$) and can be implemented efficiently with a Cholesky
factorization. Importantly, our formulation operates only on the
covariance matrices $\mX\mX'^{\top}$ and $\mX'\mX'^{\top}$ rather than the raw
activations themselves. This is especially advantageous when the number
of samples is large (e.g.\ in our setting with $256$ samples of length
$2048$, corresponding to over half a million effective columns), since
the covariance matrices are fixed-size $d\times d$ regardless of the
batch length.

The pseudocode in Algorithm~\ref{alg:chol-compression} details the 
implementation of the linear layer compression step for a single layer, 
which is applied sequentially across layers within each block. In practice,
covariance matrices being computed in Step 2 can be implemented efficiently
in batches and by additively accumulating the outer products ($\mX\mX^{\top}$, $\mX\mX'^{\top}$ and $\mX'\mX'^{\top}$), 
without explicitly materializing the full activation matrices. 
The Cholesky or eigenvalue decomposition in Step 3 is efficient for the moderate hidden 
dimensions of interest (e.g.\ $d=2^{12}-2^{16}~$) with modern GPU-accelerated linear algebra libraries. 
Further, multiple linear layers can share the same covariance matrix if they operate on the same input distribution (e.g.\ query,  key and value projections or MLP gate and up projections within the same block), so the covariance can be reused across layers to amortize the cost of Step 2 and Step 3.

\subsection{Block-level Local Refinement}
\label{sec:appendix_block}
The block-level refinement step (Step~9 of Algorithm~\ref{alg:aa-svd-full}) jointly optimizes the low-rank factors $\{\mU_j, \mV_j\}$ and block-local parameters $\boldsymbol{\theta}_i$ to minimize the MSE between the original and compressed block outputs, as described in the main text.
The objective is minimized via gradient-based optimization: we use the AdamW optimizer~\citep{loshchilov2017decoupled} with a learning rate of $10^{-4}$, trained for 25 epochs over the calibration data with a cosine learning rate schedule and linear warmup, with a batch size of 32.
In our experiments, we find this training configuration to be effective across model families and compression ratios, providing a good balance between recovery quality and computational cost.

Several steps of Algorithm~\ref{alg:aa-svd-full} also admit straightforward implementation optimizations.
Steps~1 and~10 compute the block-input activations $\mX$ and $\mX'$ for the original and compressed models, respectively; the size of these tensors scales with the number of calibration sequences and their length (e.g.\ $N_{\mathrm{cal}} \times L \times d$), and can exceed GPU memory for larger calibration sets.
In practice, these forward passes can be executed in batches on GPU with the resulting activations offloaded to CPU memory between blocks, keeping peak GPU memory usage bounded.
Finally, since the block-level refinement in Step~9 is optimized via standard backpropagation, it can be carried out over batches of calibration sequences on GPU.

\subsection{Memory and Speedup}
\label{sec:appendix_memory_speedup}

Low-rank factorization reduces both parameter count and compute cost by replacing a dense matrix with the product of two thin factors.
Consider a linear layer $\mW \in \mathbb{R}^{m \times n}$.
The original layer requires $mn$ parameters and $O(mn)$ FLOPs per forward pass.
A rank-$k$ factorization stores $mk + nk$ parameters and incurs $O(mk + nk)$ FLOPs, which is cheaper whenever $k \ll \min(m,n)$.
The effective compression ratio is
\[
\rho = \frac{mk + nk}{mn}.
\]
For example, with $m=n=4096$ and $k=512$ ($\rho=0.125$), the parameter count drops from $16.8$M to $4.2$M (a $4\times$ reduction), and FLOPs per forward pass reduce by the same factor.

Beyond weights and FLOPs, low-rank factorization can also reduce the memory footprint of the key–value (KV) cache during autoregressive inference.
Since attention projections are compressed, the activations stored in the cache scale with $k$ rather than $n$, yielding proportional savings in both memory and bandwidth.
As highlighted in SVD-LLM~\citep{wang2025svdllm} and follow-up works \citep{Wang2025DobiSVD,hu2026saes}, this reduction is crucial for long-context inference where KV-cache dominates memory usage.

Our method (\ourmethodlight) preserves this structural efficiency: the cost of computing compressed weights is incurred once during compression, while inference cost and KV-cache size match those of standard low-rank layers.
Thus, \ourmethodlight~offers the same runtime and memory benefits as prior SVD-based methods, with its main advantage lying in improved approximation quality under aggressive compression.





\subsection{Dobi-SVD Remapping}
\label{sec:appendix_dobi_svd_remapping}

Standard SVD-based compression stores a rank-$k$ approximation of an $m \times n$ weight matrix as two dense factors of total size $k(m+n)$, giving a compression ratio $\rho = k(m+n)/(mn)$.
Dobi-SVD~\citep{Wang2025DobiSVD} proposes a \emph{remapping} that stores the smaller factor and the top $\min(m,n)$ rows/columns of the larger factor in half precision (16-bit $\to$ 8-bit), with the remaining $(\max(m,n) - \min(m,n))$ rows/columns kept in full precision.
The total storage in full-precision-equivalent units reduces to $\max(m,n) \cdot k (=0.5 \cdot 2\min(m,n) \cdot k + (\max(m,n) - \min(m,n)) \cdot k)$.
This gives a compression ratio of $\rho = \max(m,n) \cdot k / (mn) = k / \min(m,n)$, so that every target ratio $\rho \in [0, 1]$ maps to a unique truncation rank $k = \rho \cdot \min(m,n)$, spanning the full valid range $k \in [0, \min(m,n)]$\footnote{Under the standard formula, $\rho \leq 1$ restricts $k \leq mn/(m+n)$, precluding high-rank approximations.}.

Because this remapping changes what a stated compression ratio means in terms of actual parameter counts, a direct comparison between Dobi-SVD and methods using the standard formula at the same nominal ratio is unfair.
To address this, we report results both without remapping (standard formula, comparable across all methods) and with remapping enabled for \ourmethodlight, denoted \ourmethodlight$^q$, at the same effective parameter budget as Dobi-SVD$^{\ddagger}$ and Dobi-SVD$^{\ddagger, q}$.

\section{Compression performance on more models}
\label{sec:comparison_more_models}

Tables~\ref{tab:LLaMA-3-1B}--\ref{tab:LLaMA-2-13B} provide full per-benchmark breakdowns for the five additional models summarized in Table~\ref{tab:multiple_models_summary} of the main text.
The results consistently replicate the trends observed on LLaMA-7B, confirming that the gains from block-level joint optimization generalize across model families (LLaMA-2, LLaMA-3, Qwen-2.5) and scales (1B--13B parameters).
SVD-LLM results are reproduced by us.
For other baselines, numbers are taken from their respective papers where available for the given model and compression ratio; entries are left blank where results were not reported.

\begin{table*}[t]
\centering
\caption{Comparison of \ourmethod~with SOTA methods for SVD-based compression of LLaMA-3-1B on three language modeling tasks and seven commonsense reasoning benchmarks (zero-shot evaluation). Best performance is marked in bold. \\
}
\label{tab:LLaMA-3-1B}
\resizebox{\textwidth}{!}{
\begin{tabular}{c l ccc !{\hspace{3pt}\vline\hspace{3pt}} ccccccccc}
\toprule
\multirow{2}{*}{\textbf{Ratio}} & \multirow{2}{*}{\textbf{Method}} & \multicolumn{3}{c}{\textbf{PPL} ($\downarrow$)} & \multicolumn{9}{c}{\textbf{Accuracy} ($\uparrow$)} \\
\cmidrule(lr){3-5} \cmidrule(lr){6-14}
 & & \textbf{Wiki2} & \textbf{PTB} & \textbf{C4} & \textbf{Openb.} & \textbf{ARC\_e} & \textbf{ARC\_c} & \textbf{WinoG.} & \textbf{PIQA} & \textbf{MathQA} & \textbf{HellaS.} & \textbf{Avg.} & \textbf{Drop (\%)} \\
\specialrule{\lightrulewidth}{2pt}{2pt}
$1.0$ & Dense & $9.75$ & $15.40$ & $13.82$ & $0.26$ & $0.65$ & $0.31$ & $0.61$ & $0.74$ & $0.29$ & $0.48$ & $0.48$ & $-$ \\
\specialrule{\lightrulewidth}{2pt}{2pt}
\multirow{1}{*}{$0.8$}
& SVD-LLM & $45.62$ & $158.15$ & $206.18$ & $0.14$ & $0.37$ & $0.19$ & $0.51$ & $0.56$ & $0.21$ & $0.28$ & $0.32$ & $32.3\%$ \\
\rowcolor{gray!12} \cellcolor{white} & \ourmethodlight~& $\mathbf{15.12}$ & $\mathbf{36.81}$ & $\mathbf{37.54}$ & $\mathbf{0.20}$ & $\mathbf{0.51}$ & $\mathbf{0.23}$ & $\mathbf{0.55}$ & $\mathbf{0.64}$ & $\mathbf{0.23}$ & $\mathbf{0.36}$ & $\mathbf{0.39}$ & $\mathbf{18.6\%}$ \\
\specialrule{\lightrulewidth}{2pt}{2pt}
\multirow{1}{*}{$0.6$}
& SVD-LLM & $402.76$ & $2027.07$ & $1449.82$ & $0.12$ & $0.27$ & $0.20$ & $0.51$ & $0.52$ & $0.20$ & $0.26$ & $0.30$ & $37.7\%$ \\
\rowcolor{gray!12} \cellcolor{white} & \ourmethodlight~& $\mathbf{23.74}$ & $\mathbf{72.00}$ & $\mathbf{91.02}$ & $\mathbf{0.19}$ & $\mathbf{0.42}$ & $\mathbf{0.22}$ & $\mathbf{0.53}$ & $\mathbf{0.58}$ & $\mathbf{0.23}$ & $\mathbf{0.30}$ & $\mathbf{0.35}$ & $\mathbf{26.1\%}$ \\
\specialrule{\lightrulewidth}{2pt}{2pt}
\multirow{1}{*}{$0.4$}
& SVD-LLM & $1369.77$ & $5082.80$ & $3520.70$ & $0.13$ & $0.26$ & $\mathbf{0.21}$ & $0.51$ & $0.53$ & $0.20$ & $0.26$ & $0.30$ & $37.1\%$ \\
\rowcolor{gray!12} \cellcolor{white} & \ourmethodlight~& $\mathbf{51.01}$ & $\mathbf{192.65}$ & $\mathbf{246.63}$ & $\mathbf{0.16}$ & $\mathbf{0.35}$ & ${0.20}$ & $\mathbf{0.52}$ & $\mathbf{0.56}$ & $\mathbf{0.21}$ & $\mathbf{0.27}$ & $\mathbf{0.32}$ & $\mathbf{32.1\%}$ \\
\bottomrule
\end{tabular}
}
\end{table*}

\begin{table*}[t]
\centering
\caption{Comparison of \ourmethod~with SOTA methods for SVD-based compression of LLaMA-2-7B on three language modeling tasks and seven commonsense reasoning benchmarks (zero-shot evaluation). Best performance is marked in bold. $^\ddagger$ uses dynamic or non-uniform ratio allocation, and $^q$ represents quantized parameters.\\\\
}
\label{tab:LLaMA-2-7B}
\resizebox{\textwidth}{!}{
\begin{tabular}{c l ccc !{\hspace{3pt}\vline\hspace{3pt}} ccccccccc}
\toprule
\multirow{2}{*}{\textbf{Ratio}} & \multirow{2}{*}{\textbf{Method}} & \multicolumn{3}{c}{\textbf{PPL} ($\downarrow$)} & \multicolumn{9}{c}{\textbf{Accuracy} ($\uparrow$)} \\
\cmidrule(lr){3-5} \cmidrule(lr){6-14}
 & & \textbf{Wiki2} & \textbf{PTB} & \textbf{C4} & \textbf{Openb.} & \textbf{ARC\_e} & \textbf{ARC\_c} & \textbf{WinoG.} & \textbf{PIQA} & \textbf{MathQA} & \textbf{HellaS.} & \textbf{Avg.} & \textbf{Drop (\%)} \\
\specialrule{\lightrulewidth}{2pt}{2pt}
$1.0$ & Dense & $5.47$ & $24.09$ & $7.28$ & $0.32$ & $0.76$ & $0.43$ & $0.69$ & $0.78$ & $0.28$ & $0.57$ & $0.55$ & $-$ \\
\specialrule{\lightrulewidth}{2pt}{2pt}
\multirow{1}{*}{$0.8$}
& SVD-LLM & $8.41$ & $119.32$ & $20.34$ & $0.26$ & $0.57$ & $0.26$ & $0.62$ & $0.66$ & $0.24$ & $0.39$ & $0.43$ & $21.7\%$ \\
\rowcolor{gray!12} \cellcolor{white} & \ourmethodlight~& $\mathbf{6.84}$ & $\mathbf{1486.20}$ & $\mathbf{13.19}$ & $\mathbf{0.30}$ & $\mathbf{0.71}$ & $\mathbf{0.37}$ & $\mathbf{0.64}$ & $\mathbf{0.72}$ & $\mathbf{0.27}$ & $\mathbf{0.48}$ & $\mathbf{0.50}$ & $\mathbf{8.9\%}$ \\
\arrayrulecolor{gray!75}\cmidrule(lr){2-14}
\arrayrulecolor{black}
& {Dobi-SVD$^{\ddagger, q}$} & {$5.92$} & $-$ & $-$ & $-$ & $-$ & $-$ & $-$ & $-$ & {$-$} & $-$ & $-$ & $-$ \\
\rowcolor{gray!12} \cellcolor{white} & \ourmethodlight$^q$ & $5.92$ & $30.78$ & $8.41$ & $0.31$ & $0.74$ & $0.42$ & $0.69$ & $0.77$ & $0.29$ & $0.55$ & $0.54$ & $\mathbf{1.6\%}$ \\
\specialrule{\lightrulewidth}{2pt}{2pt}
\multirow{2}{*}{$0.6$}
& SVD-LLM & $16.47$ & $571.51$ & $73.12$ & $0.18$ & $0.39$ & $0.21$ & $0.53$ & $0.58$ & $0.22$ & $0.31$ & $0.35$ & $36.8\%$ \\
& SAES-SVD & $11.35$ & $\mathbf{217.20}$ & $40.57$ & $-$ & $0.43$ & $-$ & $0.58$ & ${0.59}$ & $-$ & $0.32$ & $-$ & $-$ \\
\rowcolor{gray!12} \cellcolor{white} & \ourmethodlight~& $\mathbf{8.55}$ & ${2688.10}$ & $\mathbf{21.78}$ & $\mathbf{0.27}$ & $\mathbf{0.60}$ & $\mathbf{0.30}$ & $\mathbf{0.62}$ & $\mathbf{0.66}$ & $\mathbf{0.25}$ & $\mathbf{0.41}$ & $\mathbf{0.44}$ & $\mathbf{18.8\%}$ \\
\arrayrulecolor{gray!75}\cmidrule(lr){2-14}
\arrayrulecolor{black}
& {Dobi-SVD$^{\ddagger, q}$} & {$7.88$} & $-$ & $-$ & $-$ & $0.67$ & $0.31$ & $0.64$ & $0.72$ & {$-$} & $0.45$ & $-$ & $-$ \\
\rowcolor{gray!12} \cellcolor{white} & \ourmethodlight$^q$ & $6.77$ & $100.25$ & $11.64$ & $0.29$ & $0.72$ & $0.39$ & $0.66$ & $0.73$ & $0.28$ & $0.50$ & $0.51$ & $\mathbf{6.8\%}$ \\
\specialrule{\lightrulewidth}{2pt}{2pt}
\multirow{2}{*}{$0.4$}
& SVD-LLM & $97.43$ & $1612.91$ & $615.24$ & $0.13$ & $0.27$ & $0.22$ & $0.49$ & $0.53$ & $0.20$ & $0.27$ & $0.30$ & $44.9\%$ \\
& SAES-SVD & $23.89$ & $\mathbf{334.67}$ & $100.42$ & $-$ & $0.31$ & $-$ & $0.52$ & $0.55$ & $-$ & $0.30$ & $-$ & $-$ \\
\rowcolor{gray!12} \cellcolor{white} & \ourmethodlight~& $\mathbf{14.58}$ & $4342.20$ & $\mathbf{53.22}$ & $\mathbf{0.20}$ & $\mathbf{0.44}$ & $\mathbf{0.24}$ & $\mathbf{0.56}$ & $\mathbf{0.60}$ & $\mathbf{0.24}$ & $\mathbf{0.32}$ & $\mathbf{0.37}$ & $\mathbf{32.1\%}$ \\
\arrayrulecolor{gray!75}\cmidrule(lr){2-14}
\arrayrulecolor{black}
& {Dobi-SVD$^{\ddagger, q}$} & {$9.47$} & $-$ & $-$ & $-$ & $0.55$ & $0.26$ & $0.57$ & $0.67$ & {$-$} & $0.38$ & $-$ & $-$ \\
\rowcolor{gray!12} \cellcolor{white} & \ourmethodlight$^q$ & $8.86$ & $528.41$ & $22.48$ & $0.26$ & $0.60$ & $0.30$ & $0.61$ & $0.65$ & $0.25$ & $0.40$ & $0.44$ & $\mathbf{19.8\%}$ \\
\bottomrule
\end{tabular}
}
\end{table*}

\begin{table*}[t]
\centering
\caption{Comparison of \ourmethod~with SOTA methods for SVD-based compression of LLaMA-3-8B on three language modeling tasks and seven commonsense reasoning benchmarks (zero-shot evaluation). Best performance is marked in bold.\\
}
\label{tab:LLaMA-3-8B}
\resizebox{\textwidth}{!}{
\begin{tabular}{c l ccc !{\hspace{3pt}\vline\hspace{3pt}} ccccccccc}
\toprule
\multirow{2}{*}{\textbf{Ratio}} & \multirow{2}{*}{\textbf{Method}} & \multicolumn{3}{c}{\textbf{PPL} ($\downarrow$)} & \multicolumn{9}{c}{\textbf{Accuracy} ($\uparrow$)} \\
\cmidrule(lr){3-5} \cmidrule(lr){6-14}
 & & \textbf{Wiki2} & \textbf{PTB} & \textbf{C4} & \textbf{Openb.} & \textbf{ARC\_e} & \textbf{ARC\_c} & \textbf{WinoG.} & \textbf{PIQA} & \textbf{MathQA} & \textbf{HellaS.} & \textbf{Avg.} & \textbf{Drop (\%)} \\
\specialrule{\lightrulewidth}{2pt}{2pt}
$1.0$ & Dense & $6.24$ & $9.89$ & $9.57$ & $0.34$ & $0.82$ & $0.52$ & $0.74$ & $0.80$ & $0.40$ & $0.60$ & $0.60$ & $-$ \\
\specialrule{\lightrulewidth}{2pt}{2pt}
\multirow{2}{*}{$0.8$}
& SVD-LLM & $14.16$ & $64.01$ & $79.14$ & $0.24$ & $0.59$ & $0.30$ & $0.64$ & $0.66$ & $0.26$ & $0.37$ & $0.44$ & $27.5\%$ \\
& SAES-SVD & $11.49$ & $-$ & $-$ & $0.25$ & $0.59$ & ${0.28}$ & $0.66$ & ${0.67}$ & $0.27$ & $0.39$ & $0.44$ & $26.3\%$ \\
\rowcolor{gray!12} \cellcolor{white} & \ourmethodlight~& $\mathbf{9.58}$ & $\mathbf{28.11}$ & $\mathbf{33.12}$ & $\mathbf{0.26}$ & $\mathbf{0.70}$ & $\mathbf{0.37}$ & $\mathbf{0.69}$ & $\mathbf{0.72}$ & $\mathbf{0.30}$ & $\mathbf{0.46}$ & $\mathbf{0.50}$ & $\mathbf{17.1\%}$ \\
\specialrule{\lightrulewidth}{2pt}{2pt}
\multirow{2}{*}{$0.6$}
& SVD-LLM & $76.31$ & $971.56$ & $662.65$ & $0.14$ & $0.32$ & $0.19$ & $0.52$ & $0.55$ & $0.21$ & $0.28$ & $0.32$ & $47.6\%$ \\
& SAES-SVD & $23.30$ & $-$ & $-$ & $0.16$ & $0.34$ & $0.20$ & $0.55$ & ${0.55}$ & $0.22$ & $0.30$ & $0.33$ & $45.0\%$ \\
\rowcolor{gray!12} \cellcolor{white} & \ourmethodlight~& $\mathbf{13.66}$ & $\mathbf{56.33}$ & $\mathbf{74.48}$ & $\mathbf{0.23}$ & $\mathbf{0.54}$ & $\mathbf{0.27}$ & $\mathbf{0.61}$ & $\mathbf{0.64}$ & $\mathbf{0.24}$ & $\mathbf{0.37}$ & $\mathbf{0.41}$ & $\mathbf{31.3\%}$ \\
\specialrule{\lightrulewidth}{2pt}{2pt}
\multirow{2}{*}{$0.4$}
& SVD-LLM & $649.12$ & $8403.95$ & $3375.48$ & $0.12$ & $0.27$ & $0.20$ & $0.51$ & $0.52$ & $0.20$ & $0.26$ & $0.30$ & $50.7\%$ \\
& SAES-SVD & $63.09$ & $-$ & $-$ & $0.13$ & $0.29$ & $\mathbf{0.22}$ & $0.53$ & $0.54$ & $0.23$ & $0.28$ & $0.32$ & $47.4\%$ \\
\rowcolor{gray!12} \cellcolor{white} & \ourmethodlight~& $\mathbf{32.23}$ & $\mathbf{263.02}$ & $\mathbf{323.43}$ & $\mathbf{0.18}$ & $\mathbf{0.38}$ & ${0.20}$ & $\mathbf{0.52}$ & $\mathbf{0.58}$ & $\mathbf{0.22}$ & $\mathbf{0.30}$ & $\mathbf{0.34}$ & $\mathbf{  43.6\%}$ \\
\bottomrule
\end{tabular}
}
\end{table*}

\begin{table*}[t]
\centering
\caption{Comparison of \ourmethod~with SOTA methods for SVD-based compression of Qwen-2.5-7B on three language modeling tasks and seven commonsense reasoning benchmarks (zero-shot evaluation). Best performance is marked in bold.\\
}
\label{tab:qwen-2.5-7B}
\resizebox{\textwidth}{!}{
\begin{tabular}{c l ccc !{\hspace{3pt}\vline\hspace{3pt}} ccccccccc}
\toprule
\multirow{2}{*}{\textbf{Ratio}} & \multirow{2}{*}{\textbf{Method}} & \multicolumn{3}{c}{\textbf{PPL} ($\downarrow$)} & \multicolumn{9}{c}{\textbf{Accuracy} ($\uparrow$)} \\
\cmidrule(lr){3-5} \cmidrule(lr){6-14}
 & & \textbf{Wiki2} & \textbf{PTB} & \textbf{C4} & \textbf{Openb.} & \textbf{ARC\_e} & \textbf{ARC\_c} & \textbf{WinoG.} & \textbf{PIQA} & \textbf{MathQA} & \textbf{HellaS.} & \textbf{Avg.} & \textbf{Drop (\%)} \\
\specialrule{\lightrulewidth}{2pt}{2pt}
$1.0$ & Dense & $6.84$ & $11.37$ & $11.85$ & $0.34$ & $0.80$ & $0.50$ & $0.73$ & $0.79$ & $0.43$ & $0.60$ & $0.60$ & $-$ \\
\specialrule{\lightrulewidth}{2pt}{2pt}
\multirow{1}{*}{$0.8$}
& SVD-LLM & $10.69$ & $39.10$ & $38.53$ & $0.25$ & $0.67$ & $0.31$ & $0.64$ & $0.68$ & $0.32$ & $0.41$ & $0.47$ & $21.7\%$ \\
\rowcolor{gray!12} \cellcolor{white} & \ourmethodlight~& $\mathbf{8.53}$ & $\mathbf{22.90}$ & $\mathbf{22.05}$ & $\mathbf{0.31}$ & $\mathbf{0.74}$ & $\mathbf{0.41}$ & $\mathbf{0.69}$ & $\mathbf{0.73}$ & $\mathbf{0.37}$ & $\mathbf{0.49}$ & $\mathbf{0.53}$ & $\mathbf{10.7\%}$ \\
\specialrule{\lightrulewidth}{2pt}{2pt}
\multirow{1}{*}{$0.6$}
& SVD-LLM & $28.67$ & $193.31$ & $161.22$ & $0.15$ & $0.36$ & $0.20$ & $0.53$ & $0.56$ & $0.22$ & $0.29$ & $0.33$ & $44.9\%$ \\
\rowcolor{gray!12} \cellcolor{white} & \ourmethodlight~& $\mathbf{11.00}$ & $\mathbf{49.10}$ & $\mathbf{40.85}$ & $\mathbf{0.25}$ & $\mathbf{0.59}$ & $\mathbf{0.28}$ & $\mathbf{0.61}$ & $\mathbf{0.65}$ & $\mathbf{0.28}$ & $\mathbf{0.39}$ & $\mathbf{0.44}$ & $\mathbf{27.2\%}$ \\
\specialrule{\lightrulewidth}{2pt}{2pt}
\multirow{1}{*}{$0.4$}
& SVD-LLM & $136.74$ & $963.37$ & $647.59$ & $0.12$ & $0.28$ & $0.20$ & $0.49$ & $0.54$ & $0.21$ & $0.27$ & $0.30$ & $49.6\%$ \\
\rowcolor{gray!12} \cellcolor{white} & \ourmethodlight~& $\mathbf{15.67}$ & $\mathbf{86.23}$ & $\mathbf{62.81}$ & $\mathbf{0.20}$ & $\mathbf{0.44}$ & $\mathbf{0.23}$ & $\mathbf{0.57}$ & $\mathbf{0.60}$ & $\mathbf{0.23}$ & $\mathbf{0.33}$ & $\mathbf{0.37}$ & $\mathbf{37.9\%}$ \\
\bottomrule
\end{tabular}
}
\end{table*}

\begin{table*}[t] 
\centering
\caption{Comparison of \ourmethod~with SOTA methods for SVD-based compression of LLaMA-2-13B on three language modeling tasks and seven commonsense reasoning benchmarks (zero-shot evaluation). \\
}
\label{tab:LLaMA-2-13B}
\resizebox{\textwidth}{!}{
\begin{tabular}{c l ccc !{\hspace{3pt}\vline\hspace{3pt}} ccccccccc}
\toprule
\multirow{2}{*}{\textbf{Ratio}} & \multirow{2}{*}{\textbf{Method}} & \multicolumn{3}{c}{\textbf{PPL} ($\downarrow$)} & \multicolumn{9}{c}{\textbf{Accuracy} ($\uparrow$)} \\
\cmidrule(lr){3-5} \cmidrule(lr){6-14}
 & & \textbf{Wiki2} & \textbf{PTB} & \textbf{C4} & \textbf{Openb.} & \textbf{ARC\_e} & \textbf{ARC\_c} & \textbf{WinoG.} & \textbf{PIQA} & \textbf{MathQA} & \textbf{HellaS.} & \textbf{Avg.} & \textbf{Drop (\%)} \\
\specialrule{\lightrulewidth}{2pt}{2pt}
$1.0$ & Dense & $4.88$ & $34.40$ & $6.73$ & $0.35$ & $0.79$ & $0.48$ & $0.72$ & $0.79$ & $0.32$ & $0.60$ & $0.58$ & $-$ \\
\specialrule{\lightrulewidth}{2pt}{2pt}
\multirow{1}{*}{$0.8$}
& SVD-LLM & $6.65$ & $84.17$ & $14.99$ & $0.29$ & $0.67$ & $0.33$ & $0.68$ & $0.71$ & $0.26$ & $0.44$ & $0.48$ & $16.5\%$ \\
\rowcolor{gray!12} \cellcolor{white} & \ourmethodlight~& $\mathbf{5.95}$ & $\mathbf{46.43}$ & $\mathbf{11.6}$ & $\mathbf{0.33}$ & $\mathbf{0.73}$ & $\mathbf{0.40}$ & $\mathbf{0.69}$ & $\mathbf{0.74}$ & $\mathbf{0.29}$ & $\mathbf{0.52}$ & $\mathbf{0.53}$ & $\mathbf{8.9\%}$ \\
\specialrule{\lightrulewidth}{2pt}{2pt}
\multirow{1}{*}{$0.6$}
& SVD-LLM & $10.79$ & $243.85$ & $46.47$ & $0.22$ & $0.47$ & $0.22$ & $0.61$ & $0.60$ & $0.23$ & $0.33$ & $0.38$ & $33.8\%$ \\
\rowcolor{gray!12} \cellcolor{white} & \ourmethodlight~& $\mathbf{7.44}$ & $\mathbf{79.01}$ & $\mathbf{19.32}$ & $\mathbf{0.25}$ & $\mathbf{0.64}$ & $\mathbf{0.32}$ & $\mathbf{0.63}$ & $\mathbf{0.68}$ & $\mathbf{0.26}$ & $\mathbf{0.41}$ & $\mathbf{0.46}$ & $\mathbf{21.2\%}$ \\
\specialrule{\lightrulewidth}{2pt}{2pt}
\multirow{1}{*}{$0.4$}
& SVD-LLM & $44.28$ & $1296.01$ & $295.21$ & $0.14$ & $0.31$ & $0.20$ & $0.52$ & $0.54$ & $0.21$ & $0.27$ & $0.31$ & $45.9\%$ \\
\rowcolor{gray!12} \cellcolor{white} & \ourmethodlight~& $\mathbf{11.77}$ & $\mathbf{154.96}$ & $\mathbf{42.50}$ & $\mathbf{0.23}$ & $\mathbf{0.45}$ & $\mathbf{0.24}$ & $\mathbf{0.58}$ & $\mathbf{0.60}$ & $\mathbf{0.24}$ & $\mathbf{0.35}$ & $\mathbf{0.38}$ & $\mathbf{33.6\%}$ \\
\bottomrule
\end{tabular}
}
\end{table*}

\end{document}